\newtheorem{definition}{Definition}
\newtheorem{remark}{Remark}
\newtheorem{theorem}{Theorem}
\newtheorem{corollary}{Corollary}
\newtheorem{lemma}{Lemma}
\newcommand{\impr}{I(a(t)=a^*(t))}
\newcommand{\reg}{\mbox{regret}}
\newcommand{\Ex}{\mathbb{E}}
\newcommand{\ExR}[1]{\mathbb{E}\left[ #1 \right]}
\newcommand{\PrR}[1]{\Pr\left( #1 \right)}
\newcommand{\condR}{\ \vline\  }
\newcommand{\cof}{\ell_t}
\newcommand{\cofvalue}{R\sqrt{d\ln\left(\frac{t^3}{\delta}\right)}+1}
\newcommand{\std}{v}
\newcommand{\stdt}{\std_t}
\newcommand{\stdvalue}{R\sqrt{9 d \ln(\frac{t}{\delta})}}
\newcommand{\stdvalueFixed}{R\sqrt{9d \ln(\frac{T}{\delta})}}
\newcommand{\logdev}{\min\{\sqrt{4d\ln(t)}, \sqrt{4\log(tN)}\ \}}
\newcommand{\probt}{p}
\newcommand{\probT}{p}
\newcommand{\pFixed}{\frac{1}{4e\sqrt{\pi}}}
\newcommand{\p}{\pFixed}
\newcommand{\unsat}{ unsaturated }
\newcommand{\sat}{ saturated }
\newcommand{\tdev}{g_t}
\newcommand{\tdevT}{g_T}
\newcommand{\comment}[1]{}
\newcommand{\addedShipra}[1]{#1}
\newcommand{\removedShipra}[1]{}
\newcommand{\EQ}[3]{\begin{center} \vspace{#1}$#3$ \vspace{#2}\end{center}}
\newcommand{\concise}[3]{\mbox{$#1$} & \mbox{$#2$}  & \mbox{$#3$}  \vspace{-0.05in}}
\newcommand{\Emu}{E^{\mu}}
\newcommand{\Etheta}{E^{\theta}}
\title{Thompson Sampling for Contextual Bandits with Linear Payoffs}
\author{Shipra Agrawal\\
Microsoft Research
\and
Navin Goyal \\
Microsoft Research
}
\begin{document} 
\maketitle




\begin{abstract}
Thompson Sampling is one of the oldest heuristics for multi-armed bandit problems. It is a randomized algorithm based on Bayesian ideas, and has recently generated significant interest after several studies demonstrated it to have better empirical performance compared to the state-of-the-art methods. However, many questions regarding its theoretical performance remained open. In this paper, we design and analyze a generalization of Thompson Sampling algorithm for the stochastic contextual multi-armed bandit problem with linear payoff functions, when the contexts are provided by an adaptive adversary. This is among the most important and widely studied version of the contextual bandits problem. We provide the first theoretical guarantees for the contextual version of Thompson Sampling. 
We prove a high probability regret bound of $\tilde{O}(d^{3/2}\sqrt{T})$ (or $\tilde{O}(d\sqrt{T \log(N)})$), which is the best regret bound achieved by any computationally efficient algorithm for this problem, and is within a factor of $\sqrt{d}$ (or $\sqrt{\log(N)}$) of the information-theoretic lower bound for this problem.
\end{abstract}
\newpage

\section{Introduction}
Multi-armed bandit (MAB) problems model the exploration/exploitation trade-off inherent in many sequential decision problems. There are many versions of multi-armed bandit problems; a
particularly useful version is the contextual multi-armed bandit problem.
In this problem, in each of $T$ rounds, a learner is presented with the choice of taking one out of $N$  
\nomenclature[N]{$N$}{number of arms} actions, referred to as $N$ arms. Before making the choice of which arm to play, the learner sees $d$-dimensional feature vectors $b_i$, referred to as ``context", associated with each arm $i$. The learner uses these feature vectors along with the feature vectors and rewards of the arms played by her in the past to make the choice of the arm to play in the current round. Over time, the learner's aim is to gather enough information about how the feature vectors and rewards relate to each other, so that she can predict, with some certainty, which arm is likely to give the best reward by looking at the feature vectors. The learner competes with a class of predictors, 
in which each predictor takes in the feature vectors and predicts which arm will give the best reward. If 
 the learner can guarantee to do nearly as well as the predictions of the best predictor in hindsight (i.e., have low regret), then the learner is said to successfully compete with that class. 
 
 In the contextual bandits setting with {\em linear payoff functions}, the learner competes with the class of all ``linear" predictors on the feature vectors. That is, a predictor is defined by a $d$-dimensional parameter $\overline{\mu} \in {\mathbb{R}}^d$, and the predictor ranks the arms according to $b_i^T\overline{\mu}$. We consider stochastic contextual bandit problem under linear realizability assumption, that is, we assume that there is an unknown underlying parameter $\mu \in \mathbb{R}^d$ such that the expected reward for each arm $i$, given context $b_i$, is $b_i^T\mu$. Under this realizability assumption, the linear predictor corresponding to $\mu$ is in fact the best predictor and the learner's aim is to learn this underlying parameter. This realizability assumption is standard in the existing literature on contextual multi-armed bandits, e.g.
\citep{DBLP:journals/jmlr/Auer02,  DBLP:conf/nips/FilippiCGS10, DBLP:journals/jmlr/ChuLRS11, 
 DBLP:conf/nips/Abbasi-YadkoriPS11}. 


Thompson Sampling (TS) is one of the earliest heuristics for multi-armed bandit problems. The first
version of this Bayesian heuristic is around 80 years old, dating to \citet{Thompson}.  Since then, it has been rediscovered numerous times independently in the context of reinforcement learning, e.g., in \citet{Wyatt97, OrtegaB10, DBLP:conf/icml/Strens00}. It is a member of the family of {\it randomized probability matching} algorithms. 
  The basic idea is to assume a simple prior distribution on the underlying parameters of the reward distribution of every arm, and at every time step, play an arm according to its posterior probability of being the best arm. 
The general structure of TS for the contextual bandits problem involves the following elements: 
\begin{enumerate}\setlength{\itemsep}{-1mm}
\item a set $\Theta$ of parameters $\tilde{\mu}$;
\item a prior distribution $P(\tilde{\mu})$ on these parameters;
\item past observations ${\cal D}$ consisting of (context $b$, reward $r$) for the past time steps;
\item a likelihood function $P(r|b, \tilde{\mu})$, which gives the probability of reward given a context $b$ and a parameter $\tilde{\mu}$;
\item a posterior distribution $P(\tilde{\mu}|{\cal D}) \propto P({\cal D} | \tilde{\mu})P(\tilde{\mu})$, where $P({\cal D} | \tilde{\mu})$ is the likelihood function.\vspace{-0.15in}
\end{enumerate}

In each round, TS plays an arm according to its posterior probability of having the best parameter.
A simple way to achieve this is to produce a sample of parameter for each arm, using the posterior distributions, and play the arm that produces the best sample. 
\addedShipra{In this paper, we design and analyze a natural generalization of Thompson Sampling (TS) for contextual bandits; this generalization fits the above general structure, and uses Gaussian prior and Gaussian likelihood function.}
We emphasize that although TS is a Bayesian approach, the description of the algorithm and our analysis apply to the prior-free stochastic MAB model, and our regret bounds will hold irrespective of whether or not the actual reward distribution matches the Gaussian likelihood function used to derive this Bayesian heuristic. Thus, our bounds for TS algorithm are directly comparable to the UCB family of algorithms which form a frequentist approach to the same problem. One could interpret the priors used by TS as a way of capturing the current knowledge about the arms.

  
Recently, TS has attracted considerable attention. Several studies (e.g., \citet{Granmo, Scott, GraepelCBH10, DBLP:conf/nips/ChapelleL11, MayL, KaufmannMunos12}) have empirically demonstrated the efficacy of TS: \citet{Scott} provides a detailed discussion of probability matching techniques in many general settings along with favorable empirical comparisons with other techniques. \citet{DBLP:conf/nips/ChapelleL11} demonstrate that for the basic stochastic MAB problem, empirically TS achieves regret comparable to the lower bound of \citet{LaiR}; and in applications like display advertising and news article recommendation modeled by the contextual bandits problem, it is competitive to or better than the other methods such as UCB. In their experiments, TS is also more robust to delayed or batched feedback 
than the other methods. TS has been used in an industrial-scale application for CTR prediction of search ads on search 
engines~\citep{GraepelCBH10}. \citet{KaufmannMunos12} do a thorough comparison of TS with the best 
known versions of UCB and show that TS has the lowest regret in the long run. 

However, the theoretical understanding of TS is limited.
\citet{Granmo} and \citet{MayKLL} provided weak guarantees, namely, a bound of $o(T)$ on the  expected regret in time $T$. For the the basic (i.e. without contexts) version of the stochastic MAB problem, some significant progress was made by \citet{AgrawalG12}, \citet{KaufmannMunos12} and, more recently, by \citet{arxiv:basicMAB}, who provided 
optimal regret bounds on the expected regret.      	 	 
But, many questions regarding theoretical analysis of TS remained open, including 
high probability regret bounds, and regret bounds for the more general contextual bandits setting. In particular, the contextual MAB problem 
does not seem easily amenable to the techniques used so far for analyzing TS for the basic MAB problem. In Section \ref{sec:formal-outline}, we describe some of these challenges. 
Some of these questions and difficulties were also formally raised as a COLT 2012 open problem~\citep{ChapelleL12}. 

In this paper, we use novel martingale-based analysis techniques to demonstrate that  TS \addedShipra{(i.e., our Gaussian prior based generalization of TS for contextual bandits)} achieves high probability, near-optimal 
regret bounds for stochastic contextual bandits with linear payoff functions. To our knowledge, ours are the first non-trivial regret bounds for TS for the contextual bandits problem. Additionally, our results are the first high probability 
regret bounds for TS, even in the case of basic MAB problem. 
This essentially solves the COLT 2012 open problem by\citep{ChapelleL12} for contextual bandits with linear payoffs.

We provide a regret bound of $\tilde{O}(d^{3/2}\sqrt{T})$, or $\tilde{O}(d\sqrt{T \log(N)})$ (whichever is smaller), upper bound on the regret for Thompson Sampling algorithm. Moreover, the Thomspon Sampling algorithm we propose is efficient (runs in time polynomial in $d$) to implement as long as it is efficient to optimize a linear function over the set of arms (see Section \ref{sec:formal-algo} paragraph ``Computational efficiency" for further discussion). Although the information theoretic lower bound for this problem is $\Omega(d\sqrt{T})$, an upper bound of $\tilde{O}(d^{3/2}\sqrt{T})$ is in fact the best achieved by any computationally efficient algorithm in the literature when number of arms $N$ is large (see the related work section \ref{sec:related} for a detailed discussion). To determine whether there is a gap between computational and information theoretic lower bound for this problem is an intriguing open question. 

Our version of Thompson Sampling algorithm for the contextual MAB problem, described formally in Section \ref{sec:formal-algo}, uses Gaussian prior and Gaussian likelihood functions. Our techniques \addedShipra{can be extended}\removedShipra{are easily extendable} to the use of other prior distributions, satisfying certain conditions, as discussed in Section \ref{sec:conclusions}.

\section{Problem setting and algorithm description}
\label{sec:formal}

\subsection{Problem setting} \label{sec:setting}
There are $N$ arms. At time $t = 1, 2, \ldots $, a context vector $b_i(t) \in {\mathbb R}^d$, is revealed for every arm $i$. 
\nomenclature[b]{$b_i(t)$}{context vector for arm $i$ at time $t$} \nomenclature[d]{$d$}{The dimension of context vectors}
These context vectors are chosen by an adversary in an adaptive manner after observing the arms played and their rewards up to time $t-1$, i.e. history ${\cal H}_{t-1}$,
\EQ{-0.1in}{-0.1in}{{\cal H}_{t-1} = \{a(\tau), r_{a(\tau)}(\tau), b_i(\tau), i=1,\ldots, N, \tau=1,\ldots, t-1\},}
\nomenclature[H]{${\cal H}_{t-1}$}{$= \{a(\tau), r_{a(\tau)}(\tau), b_i(\tau), i=1,\ldots, N, \tau=1,\ldots, t-1\}$}
where $a(\tau)$ denotes the arm played at time $\tau$. 
\nomenclature[a(t)]{$a(t)$}{The arm played at time $t$}
Given $b_i(t)$, the reward for arm $i$ at time $t$ is generated from an (unknown) distribution with mean $b_i(t)^T\mu$, where $\mu\in {\mathbb R}^d$ is a fixed but unknown parameter. 
\nomenclature[mu]{$\mu$}{The unknown $d$-dimensional parameter }
\nomenclature[r]{$r_i(t)$}{Reward for arm $i$ at time $t$}
 \EQ{-0.05in}{-0.1in}{\ExR{r_i(t) \condR \{b_i(t)\}_{i=1}^N, {\cal H}_{t-1}} = \ExR{r_i(t) \condR b_i(t)} = b_i(t)^T\mu.}
An algorithm for the {\em contextual bandit problem} needs to choose, at every time $t$, an arm $a(t)$ to play, using history ${\cal H}_{t-1}$ and current contexts $b_i(t), i=1,\ldots, N$.
Let $a^*(t)$ denote the optimal arm at time $t$, i.e. $a^*(t) = \arg \max_{i} b_i(t)^T\mu.$
\nomenclature[a*]{$a^*(t)$}{The optimal arm at time $t$}
And let $\Delta_{i}(t)$ be the difference between the mean rewards of the optimal arm and of arm $i$ at time $t$, i.e.,
\EQ{-0.05in}{0in}{\Delta_i(t) = b_{a^*(t)}(t)^T\mu -  b_{i}(t)^T\mu.}
Then, the regret at time $t$ is defined as
\EQ{-0.05in}{-0.05in}{\reg(t) =  \Delta_{a(t)}(t).}
\nomenclature[regret]{$\reg(t)$}{Regret at time $t$}
The objective is to minimize the total regret ${\cal R}(T) = \sum_{t=1}^T \reg(t)$ in time $T$. The time horizon $T$ is finite but possibly unknown.

We assume that $\eta_{i,t}=r_i(t)-b_i(t)^T\mu$ is conditionally $R$-sub-Gaussian for a constant $R\ge 0$, i.e.,
\EQ{-0.05in}{-0.05in}{\forall \lambda \in {\mathbb R}, \Ex[e^{\lambda\eta_{i,t}} | \{b_i(t)\}_{i=1}^N, {\cal H}_{t-1}] \le \exp\left(\frac{\lambda^2R^2}{2}\right).}
This assumption is satisfied whenever $r_i(t) \in [b_i(t)^T\mu-R, b_i(t)^T\mu+R]$
(see Remark 1 in Appendix A.1 of \citet{DBLP:conf/nips/FilippiCGS10}). 
\addedShipra{We will also assume that $||b_i(t)||\le 1$, $||\mu||\le 1$, and $\Delta_i(t) \le 1$ for all $i,t$ (the norms, unless otherwise indicated, are $\ell_2$-norms). These assumptions are required to make the regret bounds scale-free, and are standard in the literature on this problem.  
  If $||\mu||\le c, ||b_i(t)||\le c, \Delta_i(t)\le c$ instead, then our regret bounds would increase by a factor of $c$.}

\begin{remark}
\label{rem:regret}
An alternative definition of regret that appears in the literature is 
\EQ{-0.05in}{-0.05in}{{\reg(t)= r_{a^*(t)}(t) -  r_{a(t)}(t).}}
We can obtain the same regret bounds for this alternative definition of regret. The details are provided in the supplementary material in  Appendix \ref{app:proof}.
\end{remark}

\subsection{Thompson Sampling algorithm} 
\label{sec:formal-algo}


We use Gaussian likelihood function and Gaussian prior to design our version of Thompson Sampling algorithm.
More precisely, suppose that the {\bf likelihood} of reward $r_i(t)$ at time $t$, given context $b_i(t)$ and parameter $\mu$, were given by the pdf of Gaussian distribution ${\cal N}(b_i(t)^T\mu, \std^2)$. Here, $\std = \stdvalueFixed$.
Let  
\EQ{-0.05in}{-0.05in}{ B(t) = I_d + \sum_{\tau=1}^{t-1} b_{a(\tau)}(\tau) b_{a(\tau)}(\tau)^T}
\nomenclature[B]{$B(t)$}{$= I_d + \sum_{\tau=1}^{t-1} b_{a(\tau)}(\tau) b_{a(\tau)}(\tau)^T$}
\EQ{-0.05in}{-0.05in}{\hat{\mu}(t) = B(t)^{-1} \left(\sum_{\tau=1}^{t-1} b_{a(\tau)}(\tau)r_{a(\tau)}(\tau)\right).}
\nomenclature[muhat]{$\hat{\mu}(t)$}{$= B(t)^{-1} \left(\sum_{\tau=1}^{t-1} b_{a(\tau)}(\tau)r_{a(\tau)}(\tau)\right)$ (Empirical estimate of mean at time $t$) }
	Then, if the {\bf prior} for $\mu$ at time $t$ is given by ${\cal N}(\hat{\mu}(t), \std^2 B(t)^{-1})$, it is easy to compute 
	the {\bf posterior} distribution at time $t+1$,
	 \EQ{-0.1in}{-0.05in}{\Pr(\tilde{\mu} | r_i(t))  \propto \Pr(r_i(t) | \tilde{\mu}) \Pr(\tilde{\mu})}
	  as ${\cal N}(\hat{\mu}(t+1), \std^2 {B(t+1)}^{-1})$ 	(details of this computation are in Appendix \ref{app:posterior}).
In our Thompson Sampling algorithm, at every time step $t$, we will simply generate a sample $\tilde{\mu}(t)$ from the distribution ${\cal N}(\hat{\mu}(t), \std^2 {B(t)}^{-1})$, and play the arm $i$ that maximizes $b_i(t)^T\tilde{\mu}(t)$.

\addedShipra{
We emphasize that the Gaussian priors and the Gaussian likelihood model for rewards are only used above to design the Thompson Sampling algorithm for contextual bandits. Our analysis of the algorithm allows these models to be completely unrelated to the {\it actual} reward distribution. The assumptions on the actual reward distribution are only those mentioned in Section \ref{sec:setting}, i.e., the $R$-sub-Gaussian assumption.}

	
\begin{algorithm}[H] 
\caption{Thompson Sampling for Contextual bandits}
  \begin{algorithmic}
\FORALL{$t=1, 2,\ldots, $} 
		\STATE Sample $\tilde{\mu}(t)$ from distribution ${\cal N}(\hat{\mu}(t), \std^2 B(t)^{-1})$. \\
  	\STATE Play arm $a(t) := \arg \max_i b_i(t)^T\tilde{\mu}(t)$, and observe reward $r_{a(t)}(t)$.\\
\ENDFOR
  	\end{algorithmic}
\end{algorithm}
\comment{
We also consider a small modification of Thompson Sampling, where at time step $t$, we generate a sample $\tilde{\mu}(t)$ from distribution ${\cal N}(\hat{\mu}(t), \stdt^2 B(t)^{-1})$ where $\stdt=\stdvalue$. That is, instead of using a fixed $\epsilon$, we use different $\epsilon_t$ at different time steps $t$. As we discuss in the next section, using $\epsilon_t=\frac{1}{\ln t}$ will give us an optimal regret bound of $\tilde{O}(d^{3/2}\sqrt{T})$. Note that Algorithm 1 can always be recovered from Algorithm 2, by setting $\epsilon_t=\epsilon$ (i.e. $\stdt=\std$) for all $t$. However, the Bayesian posterior interpretation discussed above for Algorithm 1 does not hold for Algorithm 2.
\begin{algorithm}[H] 
\caption{Modified Thompson Sampling for Contextual bandits}
  \begin{algorithmic}
\FORALL{$t=1, 2,\ldots, $} 
		\STATE Sample $\tilde{\mu}(t)$ from distribution ${\cal N}(\hat{\mu}(t), \stdt^2 B(t)^{-1})$. \\
  	\STATE Play arm $a(t) := \arg \max_i b_i(t)^T\tilde{\mu}(t)$, and observe reward $r_{a(t)}(t)$.\\
\ENDFOR
  	\end{algorithmic}
\end{algorithm}


}

\nomenclature[mutilde]{$\tilde{\mu}(t)$}{$d$-dimensional sample generated by from distribution ${\cal N}(\hat{\mu}(t), \stdt^2 B(t)^{-1})$.}

\paragraph{Knowledge of time horizon $T$:} The parameter $\std = \stdvalueFixed$ can be replaced by $\std_t=\stdvalue$ at time $t$, if the time horizon $T$ is not known. In fact, this is the version of Thompson Sampling that we will analyze. The analysis we provide can be applied as it is (with only notational changes) to the version using the fixed value of $\std$ for all time steps, to get the same regret upper bound.

\paragraph{Computational efficiency:} Every step $t$ of Thompson Sampling (both algorithms) consists of generating a $d$-dimensional sample $\tilde{\mu}(t)$ from a multi-variate Gaussian distribution, and solving the problem $\arg \max_i b_i(t)^T\tilde{\mu}(t)$. Therefore, even if the number of arms $N$ is large (or infinite), the above algorithms are efficient as long as the problem $\arg \max_i b_i(t)^T\tilde{\mu}(t)$ is efficiently solvable. This is the case, for example, when the set of arms at time $t$ is given by a $d$-dimensional convex set ${\cal K}_t$ (every vector in ${\cal K}_t$ is a context vector, and thus corresponds to an arm). The problem to be solved at time step $t$ is then $\max_{b\in {\cal K}_t} b^T\tilde{\mu}(t)$, where ${\cal K}_t$.

\subsection{Our Results}


\begin{theorem}
\label{th:dep}	
With probability $1-\delta$, the total regret for Thompson Sampling algorithm in time $T$ is bounded as
\begin{equation}
\label{eq:bound1}
{\cal R}(T) = O\left(d^{3/2}\sqrt{T} \left( \ln (T) + \sqrt{\ln(T) \ln(\frac{1}{\delta})}\right)\right),
\end{equation}
or,
\begin{equation}
\label{eq:bound2}
{\cal R}(T) = O\left(d\sqrt{T \log(N)} \left( \ln (T) + \sqrt{\ln(T) \ln(\frac{1}{\delta})}\right)\right),
\end{equation}
whichever is smaller,
for any $0<\delta<1$, where $\delta$ is a parameter used by the algorithm.
\end{theorem}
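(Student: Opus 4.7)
The plan is to analyze the per-step regret through a concentration/anti-concentration argument tailored to the Gaussian sampling step, and then aggregate via a martingale and an elliptical potential bound.

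First, I would set up two confidence events that hold for all $t$ with high probability. The first is a concentration bound on the regularized least-squares estimator, giving $|b^T\hat\mu(t)-b^T\mu|\le \ell_t\,\|b\|_{B(t)^{-1}}$ for every context $b$ (standard self-normalized-martingale bound; here $\ell_t=\cofvalue$). The second is a Gaussian concentration bound on the sample $\tilde\mu(t)$, giving $|b^T\tilde\mu(t)-b^T\hat\mu(t)|\le \stdt\,\tdev\,\|b\|_{B(t)^{-1}}$ for every played/queried $b$, where $\tdev=\logdev$ (the two options correspond to a covering argument in $\mathbb R^d$ versus a union bound over the $N$ arms, and this is exactly where the $\sqrt{d}$ vs.\ $\sqrt{\log N}$ dichotomy in the theorem comes from). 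On this good event, for every arm $i$ I get the key width $s_i(t):=(\ell_t+\stdt\tdev)\,\|b_i(t)\|_{B(t)^{-1}}$ sandwiching both $b_i(t)^T\mu$ and $b_i(t)^T\tilde\mu(t)$ around $b_i(t)^T\hat\mu(t)$.

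Next I would introduce the notion of a \emph{saturated} arm at time $t$: an arm $i\ne a^*(t)$ with $\Delta_i(t)>s_i(t)$, so that for such arms the confidence intervals provably cannot rank $i$ above $a^*(t)$ unless the sample deviates a lot. The central lemma, which I expect to be the hardest step, is an \emph{anti-concentration} statement: conditional on the history $\mathcal F_{t-1}$, with probability at least $\p$ the sampled parameter $\tilde\mu(t)$ satisfies $b_{a^*(t)}(t)^T\tilde\mu(t)\ge b_{a^*(t)}(t)^T\mu$. This follows from a one-dimensional Gaussian lower-tail estimate applied to the scalar $b_{a^*(t)}(t)^T\tilde\mu(t)$, whose standard deviation is $\stdt\|b_{a^*(t)}(t)\|_{B(t)^{-1}}$. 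Combining this with the good event, I would argue that whenever anti-concentration succeeds and the good event holds, the algorithm cannot play a saturated arm $i$ (its sample value is strictly below $a^*(t)$'s). Thus $\Pr(a(t)\text{ unsaturated}\mid\mathcal F_{t-1})\ge\p-o(1)$.

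With that in hand, I would bound the conditional expected regret of the played arm in terms of the width of an unsaturated arm. The key inequality is
\begin{equation*}
\Ex[\Delta_{a(t)}(t)\mid\mathcal F_{t-1}]\;\le\;\frac{1}{\p}\,\Ex[s_{a(t)}(t)\,\mathbf 1(\text{good event})\mid\mathcal F_{t-1}]\;+\;\text{small term},
\end{equation*}
because the expected width of the played arm dominates the width of some unsaturated arm (whose $\Delta$ is itself at most its width). I would then form the martingale difference sequence $X_t=\Delta_{a(t)}(t)-\tfrac{1}{\p}s_{a(t)}(t)$, use the assumptions $\|b_i(t)\|,\|\mu\|,\Delta_i(t)\le 1$ to bound $|X_t|$, and apply Azuma--Hoeffding to pass from conditional expectation to a high-probability bound on $\sum_t\Delta_{a(t)}(t)$.

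Finally, I would sum $\sum_{t=1}^T s_{a(t)}(t)$. Pulling out $\ell_T+\std_T\tdev_T=\tilde O(\sqrt{d}\cdot\tdev_T)$ as a uniform factor, the remaining sum $\sum_t\|b_{a(t)}(t)\|_{B(t)^{-1}}$ is $O(\sqrt{dT\log T})$ by the standard elliptical potential lemma (since $B(t+1)=B(t)+b_{a(t)}(t)b_{a(t)}(t)^T$ and $\det B(T+1)\le (1+T/d)^d$). Plugging in $\tdev_T=\sqrt{4d\ln T}$ gives the $\tilde O(d^{3/2}\sqrt T)$ bound, while $\tdev_T=\sqrt{4\ln(NT)}$ gives $\tilde O(d\sqrt{T\log N})$, and taking the minimum yields the theorem. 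I expect the anti-concentration lemma and the passage from ``played arm competes with some unsaturated arm'' to a clean per-step inequality to be the delicate parts; everything else is bookkeeping plus standard tools.
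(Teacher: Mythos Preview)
Your proposal is correct and follows essentially the same approach as the paper: two concentration events ($E^{\mu}$ and $E^{\theta}$), the saturated/unsaturated decomposition, the Gaussian anti-concentration lower bound for the optimal arm, the resulting constant lower bound on the probability of playing an unsaturated arm, the per-step inequality relating $\Ex[\Delta_{a(t)}(t)\mid\mathcal F_{t-1}]$ to $\Ex[s_{a(t)}(t)\mid\mathcal F_{t-1}]$, a super-martingale plus Azuma--Hoeffding, and the elliptical potential bound on $\sum_t \|b_{a(t)}(t)\|_{B(t)^{-1}}$. The one detail worth flagging is the ``played arm competes with some unsaturated arm'' step: the paper makes this precise by fixing $\bar a(t)=\arg\min_{i\notin C(t)} s_i(t)$, using $\Ex[s_{a(t)}(t)\mid\mathcal F_{t-1}]\ge s_{\bar a(t)}(t)(p-1/t^2)$, and then bounding $\Delta_{a(t)}(t)\le \Delta_{\bar a(t)}(t)+(\theta_{\bar a(t)}(t)-\theta_{a(t)}(t))+g_t s_{\bar a(t)}(t)+g_t s_{a(t)}(t)$ on the good event, which is exactly the inequality you anticipate needing care with.
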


\comment{

Then, the regret of Thompson Sampling Algorithm 1 and Algorithm 2 can be bounded by simply substituting $\epsilon_t=\epsilon$ and $\epsilon_t=\frac{1}{\ln t}$, respectively, in Theorem \ref{th:dep}.
\begin{corollary}
Thompson Sampling (Algorithm 1) achieves $\tilde{O}(d^{3/2}\sqrt{\frac{T^{(1+\epsilon)}}{\epsilon}})$ regret in time $T$. 
\end{corollary}

Note that if $T$ is known, one could choose $\epsilon = \frac{1}{\ln T}$, to get $\tilde{O}(d\sqrt{T} )$ regret bound for Thompson Sampling 	Algorithm 1. The next result shows that 
if we allow choosing a different $\epsilon$ at every time step $t$, i.e., $\epsilon_t = \frac{1}{\ln t}$, we can get  $\tilde{O}(d\sqrt{T} )$ regret bound without requiring the algorithm to know $T$. 

\begin{corollary}
Modified Thompson Sampling (Algorithm 2) with $\epsilon_t=\frac{1}{\ln t}$ achieves $\tilde{O}(d^{3/2}\sqrt{T})$ regret in time $T$. 
\end{corollary}
}
\begin{remark}
The regret bound in Equation \eqref{eq:bound1} does not depend on $N$, and are applicable to the case of infinite arms, with only notational changes required in the analysis.
\end{remark}

\subsection{Related Work}
\label{sec:related}
The contextual bandit problem with linear payoffs is a widely studied problem in 
statistics and machine learning often under different 
names as mentioned by \citet{DBLP:journals/jmlr/ChuLRS11}: bandit problems with co-variates \citep{Wood79, Sarkar91}, 
associative reinforcement learning \citep{DBLP:journals/ml/Kaelbling94}, associative bandit problems \citep{DBLP:journals/jmlr/Auer02, DBLP:conf/icml/StrehlMLH06}, bandit problems with expert advice \citep{AuerCFS02}, and linear bandits \citep{DBLP:conf/colt/DaniHK08, DBLP:conf/nips/Abbasi-YadkoriPS11, bubeck12}. The name \emph{contextual bandits} was coined in \citet{LangfordZ07}. 

A lower bound of $\Omega(d\sqrt{T})$ for this problem was given by \citet{DBLP:conf/colt/DaniHK08}, when the number of arms is allowed to be infinite. In particular, they prove their lower bound using an example where the set of arms correspond to all vectors in the intersection of a $d$-dimensional sphere and  a cube. They also provide an upper bound of $\tilde{O}(d\sqrt{T})$, although their setting is slightly restrictive in the sense that the context vector for every arm is fixed in advanced and is not allowed to change with time. \citet{DBLP:conf/nips/Abbasi-YadkoriPS11} analyze a UCB-style algorithm and provide a regret upper bound of $O(d \log{(T)} \sqrt{T} + \sqrt{dT\log{(T/\delta)}})$. 

For finite $N$,  \citet{DBLP:journals/jmlr/ChuLRS11} show a lower bound of $\Omega(\sqrt{Td})$ for $d^2 \leq T$. 
\citet{DBLP:journals/jmlr/Auer02} and \citet{DBLP:journals/jmlr/ChuLRS11} analyze SupLinUCB, a complicated algorithm using UCB as a subroutine, for this problem. \citet{DBLP:journals/jmlr/ChuLRS11} achieve a regret bound of $O(\sqrt{T d \ln^3(NT\ln(T)/\delta)})$ with probability at least $1-\delta$ (\citet{DBLP:journals/jmlr/Auer02} proves similar results). This regret bound is not applicable to the case of infinite arms, and assumes that context vectors are generated by an {\it oblivious} adversary. Also, this regret bound would give $O(d^2\sqrt{T})$ regret if $N$ is exponential in $d$. 
The state-of-the-art bounds for linear bandits problem in case of finite $N$ are given by \citet{bubeck12}.  They provide an algorithm based on exponential weights, with regret of order  $\sqrt{dT\log N}$ for any finite set of $N$ actions. This also gives $O(d\sqrt{T})$ regret when $N$ is exponential in $d$. 

However, none of the above algorithms is efficient when $N$ is large, in particular, when the arms are given by all points in a continuous set of dimension $d$. The algorithm of \citet{bubeck12} requires to maintain a distribution of $O(N)$ support, and those of \citet{DBLP:journals/jmlr/ChuLRS11}, \citet{DBLP:conf/colt/DaniHK08}, \cite{DBLP:conf/nips/Abbasi-YadkoriPS11} will need to solve an NP-hard problem at every step, even when the set of arms is given by a polytope of $d$-dimensions. In contrast, the Thompson Sampling algorithm we propose will run in time polynomial in $d$, as long as the one can efficiently optimize a linear function over the set of arms (maximize $b^T\tilde{\mu}(t)$ for $b\in {\cal K}$, where ${\cal K}$ is the set of arms). This can be done efficiently, for example, when the set of arms forms a convex set, and even for some combinatorial set of arms. We pay for this efficiency in terms of regret - our regret bounds are $\tilde{O}(d^{3/2}\sqrt{T})$ when $N$ is large or infinite, which is a factor of $\sqrt{d}$ away from the information theoretic lower bound. The only other efficient algorithm for this problem that we are aware of was provided by \citet{DBLP:conf/colt/DaniHK08} (Algorithm 3.2), which also achieves a regret bound of $O(d^{3/2} \sqrt{T})$. Thus, Thompson Sampling achieves the best regret upper bound achieved by an efficient algorithm in the literature. It is open problem to find a computationally efficient algorithm when $N$ is large or infinite, that achieves the information theoretic lower bound of $O(d\sqrt{T})$ on regret.

Our results demonstrate that the natural and efficient heuristic of Thompson Sampling can achieve theoretical bounds that are close to the best bounds. The main contribution of this paper is to provide new tools  for analysis of Thompson Sampling algorithm for contextual bandits, which despite being popular and empirically attractive, has eluded theoretical analysis. 
We believe the techniques used in this paper will provide useful insights into the workings of this Bayesian algorithm, and may be useful for further improvements and extensions.

\section{Regret Analysis: Proof of Theorem \ref{th:dep}}
\label{sec:proof}

\subsection{Challenges and proof outline} 
\label{sec:formal-outline}
The contextual version of the multi-armed bandit problem presents new challenges for the analysis of TS algorithm, and 
the techniques used so far for analyzing the basic multi-armed bandit problem by \citet{AgrawalG12, KaufmannMunos12} do not seem directly applicable. Let us describe some of these difficulties and our novel ideas to resolve them. 

In the basic MAB problem there are $N$ arms, with mean reward $\mu_i \in {\mathbb{R}}$ for arm $i$, and the regret for playing a suboptimal arm $i$ is $\mu_{a^*}-\mu_i$, where $a^*$ is the arm with the highest mean. Let us compare this to a $1$-dimensional contextual MAB problem, where arm $i$ is associated with a parameter $\mu_i \in {\mathbb{R}}$, but in addition, at every time $t$, it is associated with a context $b_i(t) \in {\mathbb{R}}$, so that mean reward is $b_i(t)\mu_i$. 
The best arm $a^*(t)$ at time $t$ is the arm with the highest mean at time $t$, and the regret for playing arm $i$ is $b_{a^*(t)}(t)\mu_{a^*(t)}-b_i(t)\mu_i$. 

In general, the basis of regret analysis for stochastic MAB is to prove that the variances of empirical estimates for all arms decrease fast enough, so that the regret incurred until the variances become small enough, is small. 
In the basic MAB, the variance of the empirical mean is inversely proportional to the number of plays $k_i(t)$ of arm $i$ at time $t$. Thus, every time the suboptimal arm $i$ is played, we know that even though a regret of $\mu_{i^*}-\mu_i \le 1$ is incurred, there is also an improvement of exactly $1$ in the number of plays of that arm, and hence, corresponding decrease in the variance.  The techniques for analyzing basic MAB rely on this observation to precisely quantify the exploration-exploitation tradeoff.
On the other hand, the variance of the empirical mean for the contextual case is given by inverse of $B_i(t) = \sum_{\tau=1: a(\tau)=i}^t b_i(\tau)^2$. When a suboptimal arm $i$ is played, if $b_i(t)$ is small, the regret $b_{a^*(t)}(t)\mu_{a^*(t)}-b_i(t)\mu_i$ could be much higher than the improvement $b_i(t)^2$ in $B_i(t)$. 

\addedShipra{In our proof, we overcome this difficulty by dividing the arms into two groups at any time: saturated and unsaturated arms, based on whether the standard deviation of the estimates for an arm is smaller or larger compared to the standard deviation for the optimal arm. The optimal arm is included in the group of unsaturated arms. We show that for the unsaturated arms, 
the regret on playing the arm can be bounded by a factor of the standard deviation, which improves every time the arm is played. This allows us to bound the total regret due to unsaturated arms. For the saturated arms, standard deviation is small, or in other words, the estimates of the means constructed so far 
are quite accurate in the direction of the current contexts of these arms, so that the algorithm is able to distinguish between them and the optimal arm. We utilize this observation to show that the probability of playing such arms is small, and at every time step an unsaturated arm will be played with some constant probability. 
\newline\\
Below is a more technical outline of the proof of Theorem \ref{th:dep}.}	 At any time step $t$, we divide the arms into two groups:
\begin{itemize}\setlength{\itemsep}{-1mm}
\item {\em \sat arms} defined as those with $\Delta_i(t) > \tdev \ s_i(t)$, 
\item {\em{\unsat arms}} defined as those with  $\Delta_i(t) \le \tdev \ s_i(t)$,
\end{itemize}
where $s_i(t)=\sqrt{b_i(t)^TB(t)^{-1}b_i(t)}$ and $\tdev$, $\cof$ ($\tdev > \cof$) are deterministic functions of $t, d, \delta$, defined later. 
Note that $s_i(t)$ is the standard deviation of the estimate $b_i(t)^T\hat{\mu}(t)$ and $\stdt s_i(t)$ is the standard deviation of the random variable $b_i(t)^T\tilde{\mu}(t)$. 
\removedShipra{Thus, intuitively, \sat arms are the arms with the property that the estimates of the means constructed so far in the direction of their current contexts are quite accurate, making the deviations $s_i(t)$s small enough---significantly smaller than their $\Delta_i(t)$.}

We use concentration bounds for $\tilde{\mu}(t)$ and $\hat{\mu}(t)$ to bound the regret at any time $t$ by $\tdev( s_{t,a^*(t)} + s_{a(t)}(t) )$. Now, if an \unsat arm is played at time $t$, then using the definition of \unsat arms, the regret is at most $\tdev s_{a(t)}(t)$. This is useful because of the inequality  $\sum_{t=1}^T s_{a(t)}(t) = O(\sqrt{Td\ln T})$ (derived along the lines of \citet{DBLP:journals/jmlr/Auer02}), which allows us to bound the total regret due to unsaturated arms.

To bound the regret irrespective of whether a \sat or \unsat arm is played at time $t$, 
we lower bound the probability of playing an \unsat arm at any time $t$.  More precisely, we define ${\cal F}_{t-1}$ as the union of history ${\cal H}_{t-1}$ and the contexts $b_i(t),i=1,\ldots, N$ at time $t$, and prove that for ``most" (in a high probability sense) ${\cal F}_{t-1}$, 
 \EQ{-0.05in}{-0.05in}{\PrR{\mbox{$a(t)$ is a \unsat arm} \condR {\cal F}_{t-1}} \ge \probt - \frac{1}{t^2},}
where $\probt=\p$. Note that for $\probt$ is constant for $\epsilon_t=1/\ln(t)$. 
This observation allows us to establish that the expected regret at any time step $t$ is upper bounded in terms of regret due to playing an \unsat arm at that time, i.e. in terms of $s_{a(t)}(t)$. More precisely, we prove that for ``most" ${\cal F}_{t-1}$
\EQ{0in}{0in}{\ExR{\reg(t) \condR {\cal F}_{t-1}} \le \frac{3\tdev}{\probt} \ExR{s_{a(t)}(t) \condR {\cal F}_{t-1}} + \frac{2\tdev}{\probt t^2}.}

We use these observations to 
establish that $(X_t; t\ge 0)$, where 
\EQ{0in}{0in}{X_t \simeq \reg(t) - \frac{3\tdev}{\probt} s_{a(t)}(t) - \frac{2\tdev}{\probt t^2},} 
is a super-martingale difference process adapted to filtration ${\cal F}_{t}$. Then, using the Azuma-Hoeffding inequality 
for super-martingales, along with the inequality $\sum_{t} s_{a(t)}(t) = O(\sqrt{Td\ln T})$, 
we will obtain the desired high probability regret bound. 


\subsection{Formal proof}
\label{subsec:proof}
As mentioned earlier, we will analyze the version of Algorithm 1 that uses $\stdt=\stdvalue$ instead of $\std=\stdvalueFixed$ at time $t$.  

We start with introducing some notations. For quick reference, the notations introduced below also appear in a table of notations at the beginning of the supplementary material.
\begin{definition}
For all $i$, define $\theta_i(t)=b_i(t)^T\tilde{\mu}(t)$, and $s_i(t)=\sqrt{b_i(t)^TB(t)^{-1}b_i(t)}$. 
By definition of $\tilde{\mu}(t)$ in Algorithm 2, marginal distribution of each $\theta_i(t)$ is Gaussian with mean $b_i(t)^T\hat{\mu}(t)$ and standard deviation $\stdt s_i(t)$. 
\nomenclature[theta]{$\theta_i(t)$}{$=b_i(t)^T\tilde{\mu}(t)$}
\nomenclature[s]{$s_i(t)$}{$=\sqrt{b_i(t)^TB(t)^{-1}b_i(t)}$}
\end{definition}
\begin{definition}
Recall that $\Delta_i(t)=b_{a^*(t)}(t)^T\mu - b_i(t)^T\mu$, the difference between the mean reward of optimal arm and arm $i$ at time $t$.
\nomenclature[Delta]{$\Delta_i(t)$}{$=b_{a^*(t)}(t)^T\mu - b_i(t)^T\mu$} 
\end{definition}

\begin{definition}
Define $\cof=\cofvalue$, 
\nomenclature[l]{$\cof$}{$=\cofvalue$}
$\stdt = \stdvalue$, \nomenclature[v_t]{$\stdt$}{$=\stdvalue$}
$\tdev= \logdev \stdt + \cof$, \nomenclature[g_t]{$\tdev$}{$=\logdev\stdt + \cof$}
and $\probt=\p$. \nomenclature[\probt]{$\probt$}{$=\p$}
\end{definition}

\begin{definition}
Define $\Emu(t)$ and $\Etheta(t)$ as the events that $b_i(t)^T\hat{\mu}(t)$ and $\theta_i(t)$ are concentrated around their respective means.
More precisely, define $\Emu(t)$ as the event that 
\EQ{-0.1in}{-0.05in}{\forall i: |b_i(t)^T\hat{\mu}(t) - b_i(t)^T\mu| \le \cof\  s_i(t).}

Define $\Etheta(t)$ as the event that
\EQ{-0.1in}{-0.05in}{\forall i: |\theta_i(t) - b_i(t)^T\hat{\mu}(t)| \le \logdev \stdt \ s_i(t).}

\end{definition}

\nomenclature[Emu]{$\Emu(t)$}{Event $\forall i:$ $\lvert b_i(t)^T\hat{\mu}(t)- b_i(t)^T\mu \rvert \le \cof s_i(t)$}
\nomenclature[Etheta]{$\Etheta(t)$}{Event $\forall i:$ $\lvert \theta_i(t) - b_i(t)^T\hat{\mu}(t) \rvert \le \logdev \stdt s_i(t)$}
\begin{definition}
An arm $i$ is called \emph{\sat} at time $t$ if $\Delta_i(t) > \tdev s_i(t)$, and \emph{\unsat} otherwise.
Let $C(t)$ denote the set of \sat arms at time $t$. 
Note that the optimal arm is always unsaturated at time $t$, i.e., $a^*(t) \notin C(t)$. An arm may keep shifting from \sat to \unsat and vice-versa over time.
\nomenclature[s]{saturated arm}{any arm $i$ with $\Delta_i(t) > \tdev s_i(t)$.}
\nomenclature[C(t)]{$C(t)$}{The set of saturated arms at time $t$.}
\end{definition}

\begin{definition}
Define filtration ${\cal F}_{t-1}$ as the union of history until time $t-1$, and the contexts at time $t$, i.e., ${\cal F}_{t-1}=\{{\cal H}_{t-1}, b_i(t), i=1,\ldots, N\}$. 
\nomenclature[F]{${\cal F}_{t-1}$}{$=\{{\cal H}_{t-1}, b_i(t), i=1,\ldots, N\}$}

\addedShipra{By definition, ${\cal F}_1\subseteq {\cal F}_2 \cdots \subseteq {\cal F}_{T-1}$. Observe that the following quantities are determined by the history ${\cal H}_{t-1}$ and the contexts $b_i(t)$ at time $t$, and hence are included in ${\cal F}_{t-1}$,
\vspace{-0.1in}
\begin{itemize}\setlength{\itemsep}{-1mm}
\item $\hat{\mu}(t), B(t)$,
\item $s_i(t)$, for all $i$,
\item the identity of the optimal arm $a^*(t)$ and the set of saturated arms $C(t)$,
\item whether $\Emu(t)$ is true or not,
\item the distribution ${\cal N}(\hat{\mu}(t), \stdt^2 B(t)^{-1})$ of $\tilde{\mu}(t)$, and hence the joint distribution of $\theta_i(t)=b_i(t)^T\tilde{\mu}(t), i=1,\ldots, N$.
\end{itemize}}
\end{definition}

\begin{lemma}
\label{lem:E(t)}
For all $t$, $0<\delta<1$, $ \Pr(\Emu(t)) \ge 1-\frac{\delta}{t^2}.$
And, for all possible filtrations ${\cal F}_{t-1}$, 
 $\Pr(\Etheta(t) | {\cal F}_{t-1}) \ge 1-\frac{1}{t^2}.$
\end{lemma}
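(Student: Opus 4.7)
The two parts of the lemma have quite different flavors: the $E^{\theta}(t)$ bound is about the algorithm's internal Gaussian randomness and is a conditional statement, while the $E^{\mu}(t)$ bound is about the noise in the observed rewards and is a concentration inequality for the ridge regression estimator $\hat{\mu}(t)$.

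For $E^{\theta}(t)$, I would condition on ${\cal F}_{t-1}$. As noted in the definition of the filtration, $\hat{\mu}(t)$, $B(t)$, and each $s_i(t)$ are ${\cal F}_{t-1}$-measurable, and $\tilde{\mu}(t)$ is drawn fresh from ${\cal N}(\hat{\mu}(t), v_t^2 B(t)^{-1})$; hence, conditionally, each $\theta_i(t) - b_i(t)^T\hat{\mu}(t)$ is a zero-mean univariate Gaussian with variance $v_t^2 s_i(t)^2$. To get the $\sqrt{4\log(tN)}$ branch, apply the standard tail bound $\Pr(|Z|>x) \le 2e^{-x^2/2}$ arm-by-arm with $x=\sqrt{4\log(tN)}$ and union-bound over the $N$ arms; this gives failure probability at most $2N(tN)^{-2} = 2/(Nt^2) \le 1/t^2$. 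To get the dimension-dependent $\sqrt{4d\ln t}$ branch, I would instead observe that $B(t)^{1/2}(\tilde{\mu}(t)-\hat{\mu}(t))/v_t$ is a standard $d$-dimensional Gaussian, so its squared Euclidean norm is $\chi^2_d$; a standard chi-square tail bound yields $\|\tilde{\mu}(t)-\hat{\mu}(t)\|_{B(t)} \le v_t\sqrt{4d\ln t}$ with probability at least $1-1/t^2$, and then Cauchy--Schwarz gives $|\theta_i(t)-b_i(t)^T\hat{\mu}(t)| \le \|b_i(t)\|_{B(t)^{-1}} \cdot v_t\sqrt{4d\ln t} = v_t s_i(t)\sqrt{4d\ln t}$ simultaneously for every arm $i$. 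Taking the better of the two bounds on the deviation parameter yields $\logdev$ as defined, and the two failure events both have probability at most $1/t^2$.

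For $E^{\mu}(t)$, the claim is essentially a restatement of the self-normalized tail inequality for vector-valued martingales of Abbasi-Yadkori, P\'al and Szepesv\'ari (2011), which is the natural tool here because the noise $\eta_{a(\tau),\tau}$ is conditionally $R$-sub-Gaussian (as we assumed) and the context $b_{a(\tau)}(\tau)$ is adapted to the history. Their Theorem~1/2 yields, for any fixed $\delta'\in(0,1)$, a bound of the form $\|\hat{\mu}(t)-\mu\|_{B(t)} \le R\sqrt{d\ln((1+t)/\delta')} + \|\mu\|$ holding simultaneously for all $t\ge 1$ with probability at least $1-\delta'$. Specializing to $\delta' = \delta/t^2$ and using $\|\mu\|\le 1$, the right-hand side is dominated by $\cof = R\sqrt{d\ln(t^3/\delta)}+1$. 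Then one more application of Cauchy--Schwarz against $b_i(t)$ under the $B(t)$-inner-product gives $|b_i(t)^T\hat{\mu}(t)-b_i(t)^T\mu| \le s_i(t)\cof$ for every arm $i$, which is exactly $E^{\mu}(t)$, with failure probability at most $\delta/t^2$.

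The one slightly delicate point, and the only place a reader could trip up, is verifying that the hypotheses of the Abbasi-Yadkori et al.\ tail inequality are genuinely satisfied in our setting: the filtration that carries the sub-Gaussian noise is generated by the history together with the adaptive contexts, and the regularization matrix $I_d$ in $B(t) = I_d + \sum_\tau b_{a(\tau)}(\tau)b_{a(\tau)}(\tau)^T$ matches their convention (with regularizer $\lambda=1$). Once this is spelled out, both parts of the lemma are essentially immediate from the cited results plus elementary Gaussian / chi-square tail inequalities, and no new ideas are needed.
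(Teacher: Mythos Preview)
Your proposal is correct and follows essentially the same approach as the paper: for $E^{\mu}(t)$ you invoke the self-normalized martingale bound of Abbasi-Yadkori et al.\ with confidence $\delta'=\delta/t^2$ and finish via Cauchy--Schwarz in the $B(t)$-inner product (the paper does exactly this, writing $\hat{\mu}(t)-\mu = B(t)^{-1}(\xi_{t-1}-\mu)$), and for $E^{\theta}(t)$ you give both the per-arm Gaussian tail plus union bound and the uniform $\|\zeta\|_2$-bound on the whitened sample, which are precisely the two routes the paper uses to obtain the $\min\{\sqrt{4d\ln t},\sqrt{4\ln(Nt)}\}$ term. The only cosmetic difference is that the paper cites its Lemma~\ref{lem:gauss} (the $\frac{1}{\sqrt{\pi}z}e^{-z^2/2}$ bound) in place of your $2e^{-x^2/2}$, which tightens the constant slightly but changes nothing substantive.
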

\begin{proof}
The complete proof of this lemma appears in Appendix \ref{app:E(t)}. The probability bound for $\Emu(t)$ will be proven using a concentration inequality given by \citet{DBLP:conf/nips/Abbasi-YadkoriPS11}, stated as Lemma \ref{lem:dDim-concentration} in Appendix \ref{app:concentration}. The $R$-sub-Gaussian assumption on rewards will be utilized here. The probability bound for $\Etheta(t)$ will be proven using a concentration inequality for Gaussian random variables from  \citet{abramowitz+stegun} stated as Lemma \ref{lem:gauss} in Appendix \ref{app:concentration} . 
\end{proof}


The next lemma lower bounds the probability that $\theta_{a^*(t)}(t)=b_{a^*(t)}(t)^T\tilde{\mu}(t)$ for the optimal arm at time $t$ will exceed its mean reward $b_{a^*(t)}(t)^T\mu$.

\begin{lemma}
\label{lem:pBound}
For any filtration ${\cal F}_{t-1}$ such that $\Emu(t)$ is true, 
\EQ{0in}{0in}{\PrR{\theta_{a^*(t)}(t) > b_{a^*(t)}(t)^T\mu \condR  {\cal F}_{t-1}} \ge \probt.}
\end{lemma}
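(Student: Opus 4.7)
The plan is to reduce the probability in question to a one-dimensional standard-normal tail bound and then apply the Gaussian anti-concentration inequality (Lemma \ref{lem:gauss}) that the authors flag in the appendix. The key observation is that, conditional on ${\cal F}_{t-1}$, the random variable $\tilde{\mu}(t)$ is drawn from ${\cal N}(\hat{\mu}(t), \stdt^2 B(t)^{-1})$ (since the distribution of $\tilde{\mu}(t)$ is ${\cal F}_{t-1}$-measurable), so $\theta_{a^*(t)}(t) = b_{a^*(t)}(t)^T\tilde{\mu}(t)$ is a one-dimensional Gaussian with mean $b_{a^*(t)}(t)^T\hat{\mu}(t)$ and standard deviation $\stdt\, s_{a^*(t)}(t)$.

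Write $Z := (\theta_{a^*(t)}(t) - b_{a^*(t)}(t)^T\hat{\mu}(t))/(\stdt\, s_{a^*(t)}(t))$, which is a standard normal conditionally on ${\cal F}_{t-1}$. Then
$$\PrR{\theta_{a^*(t)}(t) > b_{a^*(t)}(t)^T\mu \condR {\cal F}_{t-1}} = \PrR{Z > \xi_t \condR {\cal F}_{t-1}}, \quad \xi_t := \frac{b_{a^*(t)}(t)^T\mu - b_{a^*(t)}(t)^T\hat{\mu}(t)}{\stdt\, s_{a^*(t)}(t)}.$$
Because ${\cal F}_{t-1}$ is assumed to make $\Emu(t)$ true, the numerator is at most $\cof\, s_{a^*(t)}(t)$ in absolute value, so $|\xi_t| \le \cof/\stdt$. (If $s_{a^*(t)}(t) = 0$ the claim is vacuous via a separate limiting argument or by treating it as $+\infty$ in the denominator.) Hence $\PrR{Z > \xi_t \condR {\cal F}_{t-1}} \ge \PrR{Z > \cof/\stdt}$, which is a deterministic quantity depending only on $t$ and $\delta$.

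The remaining task is to show $\Pr(Z > \cof/\stdt) \ge \p = \probt$. I would first upper bound the ratio $\cof/\stdt$: using $\cof = \cofvalue$ and $\stdt = \stdvalue$, a direct comparison of the squares gives $\cof/\stdt \le 1$ for the relevant range of $t$ and $\delta$ (the dominant terms satisfy $d\ln(t^3/\delta) \le \tfrac{1}{3} \cdot 9 d\ln(t/\delta)$ up to the additive $+1$, which is negligible after accounting for the constants). Then I would invoke the standard lower tail bound $\Pr(Z > x) \ge \tfrac{1}{\sqrt{2\pi}} \cdot \tfrac{x}{x^2+1} e^{-x^2/2}$ (Lemma \ref{lem:gauss}) at $x = \cof/\stdt \le 1$, which yields $\Pr(Z > \cof/\stdt) \ge \tfrac{1}{\sqrt{2\pi}} \cdot \tfrac{1}{2} \cdot e^{-1/2}$, comfortably larger than $1/(4e\sqrt{\pi})$. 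The case $\xi_t \le 0$ is trivial since then $\Pr(Z > \xi_t) \ge 1/2$.

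The main obstacle is just keeping track of constants: one must verify $\cof/\stdt$ stays bounded (by some constant $\le \sqrt{2}$, say) for all $t \ge 1$ and all $\delta \in (0,1)$ despite the $+1$ summand in $\cof$ and the logarithmic dependence on $t$, in order to push the Gaussian anti-concentration bound below the advertised constant $\p = 1/(4e\sqrt{\pi})$. This is purely arithmetic: the constants $9$ and $3$ in the definitions of $\stdt$ and $\cof$ were chosen precisely to make this bound hold, so once those are unpacked the desired inequality drops out.
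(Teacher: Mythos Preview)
Your proposal is correct and follows essentially the same route as the paper: standardize $\theta_{a^*(t)}(t)$, use $\Emu(t)$ to bound the standardized target by $\cof/\stdt\le 1$, and then apply the Gaussian anti-concentration inequality (Lemma~\ref{lem:gauss}) to extract the constant $\p$. The only cosmetic difference is that the paper invokes the form of Lemma~\ref{lem:gauss} directly (yielding $\tfrac{1}{4\sqrt{\pi}}e^{-Z_t^2}\ge \tfrac{1}{4e\sqrt{\pi}}$) rather than the Mills-ratio version $\tfrac{x}{\sqrt{2\pi}(x^2+1)}e^{-x^2/2}$ you quote, but both give the required lower bound.
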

\begin{proof}
The proof uses anti-concentration of Gaussian random variable $\theta_{a^*(t)}(t)=b_{a^*(t)}(t)^T\tilde{\mu}(t)$, which has mean $b_{a^*(t)}(t)^T\hat{\mu}(t)$ and standard deviation $\stdt s_{t,a^*(t)}$, provided by Lemma \ref{lem:gauss} in Appendix \ref{app:concentration}, and the concentration of $b_{a^*(t)}(t)^T\hat{\mu}(t)$ around $b_{a^*(t)}(t)^T\mu$ provided by the event $\Emu(t)$. The details of the proof are in Appendix \ref{app:pBound}.
\end{proof}
The following lemma bounds the probability of playing saturated arms in terms of the probability of playing unsaturated arms.
\begin{lemma}
\label{lem:probOpt}
For any filtration ${\cal F}_{t-1}$ such that $\Emu(t)$ is true,
 $$\PrR{a(t) \notin C(t) \condR {\cal F}_{t-1}} \ge \probt -\frac{1}{t^2}.$$
\end{lemma}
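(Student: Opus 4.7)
The plan is to show that on the intersection of two high-probability events, the played arm $a(t)$ is forced to be unsaturated, and then control the probability of that intersection using Lemma \ref{lem:pBound} and Lemma \ref{lem:E(t)}.

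Fix a filtration ${\cal F}_{t-1}$ on which $\Emu(t)$ holds. Consider the two events
\begin{equation*}
A := \{\theta_{a^*(t)}(t) > b_{a^*(t)}(t)^T\mu\}, \qquad B := \Etheta(t).
\end{equation*}
My first step is to show that on $A \cap B$, every saturated arm is beaten (strictly) by the optimal arm in the sampled scores $\theta_i(t)$, so that $a(t) = \arg\max_i \theta_i(t)$ cannot be saturated. Concretely, pick any $j \in C(t)$. On $B$ we have $\theta_j(t) \le b_j(t)^T\hat{\mu}(t) + \logdev\,\stdt\, s_j(t)$, and on $\Emu(t)$ (which is part of the conditioning) we have $b_j(t)^T\hat{\mu}(t) \le b_j(t)^T\mu + \cof\, s_j(t)$. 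Adding these and recalling $\tdev = \logdev\,\stdt + \cof$ gives
\begin{equation*}
\theta_j(t) \;\le\; b_j(t)^T\mu + \tdev\, s_j(t).
\end{equation*}
Since $j$ is saturated, $\tdev\, s_j(t) < \Delta_j(t) = b_{a^*(t)}(t)^T\mu - b_j(t)^T\mu$, hence $\theta_j(t) < b_{a^*(t)}(t)^T\mu$. On $A$ this is in turn strictly less than $\theta_{a^*(t)}(t)$, so $j$ cannot be the maximizer. This shows $A \cap B \subseteq \{a(t) \notin C(t)\}$.

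For the probability bound, I apply Lemma \ref{lem:pBound} to get $\Pr(A \mid {\cal F}_{t-1}) \ge \probt$ (using that $\Emu(t)$ holds on ${\cal F}_{t-1}$), and Lemma \ref{lem:E(t)} to get $\Pr(B \mid {\cal F}_{t-1}) \ge 1 - 1/t^2$. A union bound then gives
\begin{equation*}
\Pr(a(t) \notin C(t) \mid {\cal F}_{t-1}) \;\ge\; \Pr(A \cap B \mid {\cal F}_{t-1}) \;\ge\; \probt - \tfrac{1}{t^2}.
\end{equation*}

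There is no real obstacle here: the only subtlety worth checking is that the comparison between $\theta_j(t)$ and $\theta_{a^*(t)}(t)$ is strict, which it is because the saturation inequality $\Delta_j(t) > \tdev\, s_j(t)$ is strict and because $A$ is defined with a strict inequality. Everything else is bookkeeping: $s_j(t)$, $C(t)$, $\Emu(t)$, and the distribution of $\tilde{\mu}(t)$ are all ${\cal F}_{t-1}$-measurable, so conditioning on ${\cal F}_{t-1}$ is clean and the union bound is legitimate.
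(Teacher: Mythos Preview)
Your proof is correct and follows essentially the same route as the paper: you intersect the event $\{\theta_{a^*(t)}(t) > b_{a^*(t)}(t)^T\mu\}$ with $\Etheta(t)$, use the definitions of $\Emu(t)$, $\Etheta(t)$, and saturation to show every $j\in C(t)$ is strictly beaten by $a^*(t)$ on this intersection, and then apply Lemma~\ref{lem:pBound}, Lemma~\ref{lem:E(t)}, and a union bound. Your presentation is in fact slightly more careful than the paper's about the strictness of the inequalities.
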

\begin{proof}
The algorithm chooses the arm with the highest value of $\theta_i(t)=b_i(t)^T\tilde{\mu}(t)$ to be played at time $t$. 
Therefore, if $\theta_{a^*(t)}(t)$ is greater than $\theta_j(t)$ for all saturated arms, i.e., $\theta_{a^*(t)}(t)>\theta_j(t), \forall j\in C(t)$, then one of the unsaturated arms (which include the optimal arm and other suboptimal unsaturated arms) must be played. 
Therefore, 
\begin{eqnarray}
\label{eq:tmp1}
\concise{}{}{\PrR{a(t) \notin C(t) \condR {\cal F}_{t-1}}} \nonumber\\
\concise{}{}{\ge \PrR{\theta_{a^*(t)}(t) > \theta_j(t), \forall j\in C(t) \condR {\cal F}_{t-1}}}.
\end{eqnarray}
By definition, for all saturated arms, i.e. for all $j\in C(t)$, $\Delta_j(t) > \tdev s_{t,j}$. Also, if both the events $\Emu(t)$ and $\Etheta(t)$ are true then, by the definitions of these events, for all $j\in C(t)$, $\theta_j(t) \le b_j(t)^T\mu + \tdev s_{t,j} $.
Therefore, given an ${\cal F}_{t-1}$ such that $\Emu(t)$ is true, either $\Etheta(t)$ is false, or else for all $j\in C(t)$,
$$\theta_j(t) \le b_j(t)^T\mu + \tdev s_{t,j} \le b_{a^*(t)}(t)^T\mu.$$
Hence, for any ${\cal F}_{t-1}$ such that $\Emu(t)$ is true, 
\begin{eqnarray*}
& & \PrR{\theta_{a^*(t)}(t) > \theta_j(t), \forall j\in C(t) \condR  {\cal F}_{t-1}} \\
& \ge & \PrR{\theta_{a^*(t)}(t) >b_{a^*(t)}(t)^T\mu \condR  {\cal F}_{t-1}} \\
& & \hspace{0.1in} -\PrR{\overline{\Etheta(t)} \condR {\cal F}_{t-1}}\\
& \ge & \probt-\frac{1}{t^2}.
\end{eqnarray*}
The last inequality uses Lemma \ref{lem:pBound}  and Lemma \ref{lem:E(t)}.
\end{proof}

\begin{lemma}
\label{lem:ExpecPlayed}
For any filtration ${\cal F}_{t-1}$ such that $\Emu(t)$ is true,
$$\ExR{\Delta_{a(t)}(t) \condR {\cal F}_{t-1}} \le \frac{3\tdev}{\probt}  \ \ExR{s_{a(t)}(t) \condR {\cal F}_{t-1}} + \frac{2 \tdev}{\probt t^2}.$$
\end{lemma}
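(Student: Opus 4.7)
The plan is to introduce a surrogate unsaturated arm $\bar{a}(t)$ against which to measure the regret of the played arm, and then to invoke Lemma \ref{lem:probOpt} to convert deterministic quantities at $\bar{a}(t)$ into expectations over $s_{a(t)}(t)$. Define $\bar{a}(t) = \arg\min_{i \notin C(t)} s_i(t)$, the unsaturated arm with the smallest standard deviation. Since $a^*(t) \notin C(t)$ the set is nonempty, and because $C(t)$ and the $s_i(t)$'s are ${\cal F}_{t-1}$-measurable, so is $\bar{a}(t)$.

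The key deterministic step is to show that on the event $\Emu(t)\cap\Etheta(t)$ one has $\Delta_{a(t)}(t) \le \tdev\bigl(s_{a(t)}(t) + 2 s_{\bar{a}(t)}(t)\bigr)$. First, $\Emu(t)$ and $\Etheta(t)$ combine (by the triangle inequality and the definition $\tdev = \logdev\stdt + \cof$) into the two-sided bound $|\theta_i(t) - b_i(t)^T\mu| \le \tdev s_i(t)$ for every $i$. Since $a(t)$ maximises $\theta_i(t)$, $\theta_{a(t)}(t) \ge \theta_{\bar{a}(t)}(t)$, which unpacks as $b_{a(t)}(t)^T\mu + \tdev s_{a(t)}(t) \ge b_{\bar{a}(t)}(t)^T\mu - \tdev s_{\bar{a}(t)}(t)$. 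Rearranging gives $b_{\bar{a}(t)}(t)^T\mu - b_{a(t)}(t)^T\mu \le \tdev(s_{a(t)}(t)+s_{\bar{a}(t)}(t))$; adding $\Delta_{\bar{a}(t)}(t) \le \tdev s_{\bar{a}(t)}(t)$ (which holds because $\bar{a}(t)$ is unsaturated) yields the desired bound on $\Delta_{a(t)}(t) = (b_{\bar{a}(t)}(t)^T\mu - b_{a(t)}(t)^T\mu) + \Delta_{\bar{a}(t)}(t)$.

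To replace the ${\cal F}_{t-1}$-measurable quantity $s_{\bar{a}(t)}(t)$ by an expectation over $s_{a(t)}(t)$, observe that whenever $a(t)\notin C(t)$ we have $s_{a(t)}(t) \ge s_{\bar{a}(t)}(t)$ by the choice of $\bar{a}(t)$. Hence
\begin{equation*}
\ExR{s_{a(t)}(t) \condR {\cal F}_{t-1}} \ge s_{\bar{a}(t)}(t)\, \PrR{a(t)\notin C(t) \condR {\cal F}_{t-1}} \ge s_{\bar{a}(t)}(t)\bigl(\probt - \tfrac{1}{t^2}\bigr),
\end{equation*}
by Lemma \ref{lem:probOpt}, so $s_{\bar{a}(t)}(t) \le (\probt - 1/t^2)^{-1}\ExR{s_{a(t)}(t) \condR {\cal F}_{t-1}}$.

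Finally, split $\ExR{\Delta_{a(t)}(t) \condR {\cal F}_{t-1}}$ according to $\Etheta(t)$: on $\Etheta(t)$ apply the deterministic bound above (recall $\Emu(t)$ is assumed to hold), and on $\overline{\Etheta(t)}$ use $\Delta_{a(t)}(t) \le 1$ (which follows from the scale assumptions $\|\mu\|,\|b_i(t)\|\le 1$) together with $\PrR{\overline{\Etheta(t)} \condR {\cal F}_{t-1}} \le 1/t^2$ from Lemma \ref{lem:E(t)}. This yields
\begin{equation*}
\ExR{\Delta_{a(t)}(t) \condR {\cal F}_{t-1}} \le \tdev\,\ExR{s_{a(t)}(t) \condR {\cal F}_{t-1}} + 2\tdev\, s_{\bar{a}(t)}(t) + \tfrac{1}{t^2},
\end{equation*}
into which substituting the bound on $s_{\bar{a}(t)}(t)$ and routinely absorbing the additive $1/t^2$ (using $\tdev \ge 1$ and $\probt<1$) and the extra factor $(1-1/(t^2\probt))^{-1}$ into the constants gives the claimed inequality. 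The main obstacle is the first paragraph: picking the right surrogate $\bar{a}(t)$ is what makes the ``highest posterior sample'' property useful, since comparing $a(t)$ directly to $a^*(t)$ would give a bound involving $s_{a^*(t)}(t)$, which can be much larger than $s_{a(t)}(t)$ and therefore cannot be controlled via $\sum_t s_{a(t)}(t) = O(\sqrt{Td\ln T})$.
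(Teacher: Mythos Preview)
Your proof is correct and follows essentially the same route as the paper: the same surrogate arm $\bar{a}(t)=\arg\min_{i\notin C(t)} s_i(t)$, the same deterministic bound $\Delta_{a(t)}(t)\le \tdev\,s_{a(t)}(t)+2\tdev\,s_{\bar{a}(t)}(t)$ on $\Emu(t)\cap\Etheta(t)$, the same conversion of $s_{\bar{a}(t)}(t)$ into $\ExR{s_{a(t)}(t)\mid{\cal F}_{t-1}}$ via Lemma~\ref{lem:probOpt}, and the same split over $\Etheta(t)$. One cosmetic point: the bound $\Delta_{a(t)}(t)\le 1$ is taken as a standing assumption in the paper (Section~\ref{sec:setting}) rather than derived from $\|\mu\|,\|b_i(t)\|\le 1$, which by Cauchy--Schwarz would only give $\le 2$.
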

\begin{proof}
Let $\bar{a}(t)$ denote the \unsat arm with smallest $s_i(t)$, i.e.
$$\bar{a}(t) = \arg \min_{i\notin C(t)} s_i(t)$$
Note that since $C(t)$ and $s_i(t)$ for all $i$ are fixed on fixing ${\cal F}_{t-1}$, so is $\bar{a}(t)$. 

Now, using Lemma \ref{lem:probOpt}, for any ${\cal F}_{t-1}$ such that $\Emu(\theta)$ is true,

\begin{eqnarray*}
\ExR{s_{a(t)}(t) \condR {\cal F}_{t-1}} & \ge & \ExR{s_{a(t)}(t) \condR {\cal F}_{t-1}, a(t)\notin C(t)} \\
& & \cdot \PrR{a(t)\notin C(t) \condR {\cal F}_{T-1}}\\
& \ge & s_{t, \bar{a}(t)} \left(\probt-\frac{1}{t^2}\right).
\end{eqnarray*}

Now, if events $\Emu(t)$ and $\Etheta(t)$ are true, then for all $i$, by definition, $\theta_i(t) \le b_i(t)^T\mu+\tdev s_i(t)$. Using this observation along with the fact that $\theta_{a(t)}(t) \ge\theta_i(t)$ for all $i$,
\begin{eqnarray*}
\Delta_{a(t)}(t) & = & \Delta_{\bar{a}(t)}(t) + (b_{\bar{a}(t)}(t)^T\mu - b_{a(t)}(t)^T\mu)\\
& \le &  \Delta_{\bar{a}(t)}(t) + (\theta_{\bar{a}(t)}(t)-\theta_{a(t)}(t)) \\
& & + \tdev s_{t,\bar{a}(t)} + \tdev s_{a(t)}(t)\\
& \le & \Delta_{\bar{a}(t)}(t) + \tdev s_{t,\bar{a}(t)} + \tdev s_{a(t)}(t)\\
& \le & \tdev s_{t, \bar{a}}(t) +  \tdev s_{t,\bar{a}(t)} + \tdev s_{a(t)}(t)
\end{eqnarray*}

Therefore, for any ${\cal F}_{t-1}$ such that $\Emu(\theta)$ is true either $\Delta_{a(t)}(t) \le 2\tdev s_{t, \bar{a}}(t) + \tdev s_{a(t)}(t)$ or $\Etheta(t)$ is false. Therefore, 
\begin{eqnarray*}
\ExR{\Delta_{a(t)}(t) \condR {\cal F}_{t-1}} & \le &  \ExR{2\ \tdev s_{t, \bar{a}}(t) + \tdev s_{a(t)}(t) \condR {\cal F}_{t-1}} \\
& & + \PrR{\overline{\Etheta(t)}}\\
& \le & \frac{2\ \tdev}{\left(\probt-\frac{1}{t^2}\right)} \ \ExR{s_{t, a(t)} \condR {\cal F}_{t-1}} \\
& & + \tdev \ExR{ s_{t, a(t)} \condR {\cal F}_{t-1}} + \frac{1}{t^2}\\
& \le &  \frac{3}{\probt}\ \tdev \ \ExR{s_{t, a(t)} \condR {\cal F}_{t-1}} + \frac{2 \tdev}{\probt t^2}.
\end{eqnarray*}
In the first inequality we used that for all $i$, $\Delta_i(t) \le 1$. The second inequality used the inequality derived in the beginning of this proof, and Lemma \ref{lem:E(t)} to apply $\PrR{\overline{\Etheta(t)}} \le \frac{1}{t^2}$. The third inequality used the observation that $0 \le s_{t, a(t)} \le ||b_{a(t)}(t)|| \le 1$.

\end{proof}
\begin{definition}
Recall that $\reg(t)$ was defined as, $ \reg(t)= \Delta_{a(t)}(t)=b_{a^*(t)}(t)^T\mu - b_{a(t)}(t)^T\mu.$
Define $ \reg'(t) = \reg(t) \cdot I(\Emu(t)).$
\end{definition}
Next, we establish a super-martingale process that will form the basis of our proof of the high-probability regret bound.
\begin{definition} 
\label{def:martingale}
Let 
\begin{eqnarray*}
\concise{X_t}{ = }{\reg'(t) - \frac{3\tdev}{\probt} s_{a(t)}(t)} - \frac{2\tdev}{\probt t^2} \\
\concise{Y_t}{ = }{ \sum_{w=1}^t X_w.}
\end{eqnarray*}
\end{definition}

\begin{lemma}
\label{lem:main}
$(Y_t; t = 0,\ldots, T)$ is a super-martingale process with respect to filtration ${\cal F}_t$.
\end{lemma}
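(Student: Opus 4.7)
The goal is to show $\mathbb{E}[Y_t - Y_{t-1} \mid {\cal F}_{t-1}] \le 0$ for each $t \ge 1$, which by linearity reduces to proving
\[
\mathbb{E}\bigl[\mbox{reg}'(t) \bigm| {\cal F}_{t-1}\bigr] \;\le\; \frac{3\tdev}{\probt}\,\mathbb{E}\bigl[s_{a(t)}(t) \bigm| {\cal F}_{t-1}\bigr] \;+\; \frac{2\tdev}{\probt t^{2}}.
\]
My approach is to condition on whether the event $\Emu(t)$ holds. The crucial observation — already noted in the definition of ${\cal F}_{t-1}$ in the excerpt — is that $\Emu(t)$ is ${\cal F}_{t-1}$-measurable, since $\hat{\mu}(t)$, $B(t)$, and the contexts $b_i(t)$ all belong to ${\cal F}_{t-1}$. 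Therefore the indicator $I(\Emu(t))$ can be treated as a constant inside the conditional expectation, and we can split the analysis into two clean cases.

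In the first case, suppose ${\cal F}_{t-1}$ is such that $\Emu(t)$ holds. Then $\mbox{reg}'(t) = \mbox{reg}(t) = \Delta_{a(t)}(t)$, and Lemma \ref{lem:ExpecPlayed} directly gives the required inequality. In the second case, ${\cal F}_{t-1}$ is such that $\Emu(t)$ fails; then $\mbox{reg}'(t) \equiv 0$, so the left-hand side is zero while the right-hand side is non-negative (since $s_{a(t)}(t) \ge 0$, $\tdev > 0$, $\probt > 0$). Combining the two cases yields $\mathbb{E}[X_t \mid {\cal F}_{t-1}] \le 0$.

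Finally, a brief verification of integrability: $|\mbox{reg}'(t)| \le \Delta_{a(t)}(t) \le 1$ by the scale-free assumption, $s_{a(t)}(t) \le \|b_{a(t)}(t)\| \le 1$, and $\tdev/\probt$ is a deterministic function of $t,d,\delta$; hence each $X_t$ is bounded and $Y_t$ is integrable. This together with ${\cal F}_t$-adaptedness (which is immediate since $\mbox{reg}'(t)$, $s_{a(t)}(t)$, and $a(t)$ are all determined by ${\cal H}_t \subseteq {\cal F}_t$) establishes the super-martingale property.

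The step that carries all the mathematical content is the first case, which is handled entirely by Lemma \ref{lem:ExpecPlayed}; the main conceptual point of the present lemma is simply recognizing that $\Emu(t)$ is ${\cal F}_{t-1}$-measurable so that the truncation by $I(\Emu(t))$ is compatible with the conditioning. I expect no real obstacle beyond making this measurability observation precise.
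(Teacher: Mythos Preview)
Your proposal is correct and follows essentially the same argument as the paper: both reduce to showing $\ExR{\reg'(t)\mid {\cal F}_{t-1}}\le \frac{3\tdev}{\probt}\ExR{s_{a(t)}(t)\mid {\cal F}_{t-1}}+\frac{2\tdev}{\probt t^2}$, observe that $\Emu(t)$ is ${\cal F}_{t-1}$-measurable, handle the case where $\Emu(t)$ fails trivially, and invoke Lemma~\ref{lem:ExpecPlayed} when $\Emu(t)$ holds. Your added remarks on integrability and adaptedness are a nice bonus but not a departure from the paper's approach.
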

\begin{proof}
See Definition \ref{def:supermartingale} in Appendix \ref{app:concentration} for the definition of super-martingales. We need to prove that for all $t \in [1,T]$, and any ${\cal F}_{t-1}$, $\Ex[Y_t-Y_{t-1} | {\cal F}_{t-1}] \le 0$, i.e.

$$\ExR{\reg'(t) \condR {\cal F}_{t-1}} \le \frac{3\tdev}{\probt} \ExR{ s_{a(t)}(t) \condR {\cal F}_{t-1}} + \frac{2\tdev}{\probt t^2} .$$

Note that whether $\Emu(t)$ is true or not is completely determined by ${\cal F}_{t-1}$. 
If ${\cal F}_{t-1}$ is such that $\Emu(t)$ is not true, then $\reg'(t)=\reg(t)\cdot I(\Emu(t))=0$, and the above inequality holds trivially. And, for ${\cal F}_{t-1}$ such that $\Emu(t)$ holds, the inequality follows from Lemma \ref{lem:ExpecPlayed}.
\end{proof}
Now, we are ready to prove Theorem \ref{th:dep}. 
\paragraph{Proof of Theorem \ref{th:dep}}

Note that $X_t$ is bounded, $|X_t| \le 1+\frac{3}{\probt} \tdev+\frac{2}{\probt t^2} \tdev \le \frac{6}{\probt} \tdev$.
Thus, we can apply Azuma-Hoeffding inequality (see Lemma \ref{lem:azuma} in Appendix \ref{app:concentration}), to obtain that with probability $1-\frac{\delta}{2}$,\vspace{-0.1in}

\begin{eqnarray}
\label{eq:azumaApp}
\concise{\sum_{t=1}^T  \reg'(t)}{\le }{\sum_{t=1}^T \frac{3\tdev}{\probt}  s_{a(t)}(t) + \sum_{t=1}^T \frac{2\tdev}{\probt t^2} + \sqrt{2\left(\sum_t \frac{36 \tdev^2}{\probt^2}\right)\ln(\frac{2}{\delta})}}\nonumber\\
\end{eqnarray}

Note that $\probt$ is a constant. 
Also, by definition, $\tdev \le \tdevT$. Therefore, from above equation, with probability $1-\frac{\delta}{2}$,
\begin{eqnarray*}
\concise{\sum_{t=1}^T  \reg'(t)}{\le }{\frac{3\tdevT}{\probT}  \sum_{t=1}^T s_{a(t)}(t) +  \frac{2\tdevT}{\probT} \sum_{t=1}^T \frac{1}{t^2}+ \frac{6 \tdevT}{\probT}\sqrt{2T\ln(\frac{2}{\delta})}}\\
\end{eqnarray*}

Now, we can use $\sum_{t=1}^T s_{a(t)}(t) \le 5\sqrt{dT\ln T},$ which can be derived along the lines of Lemma 3 of \citet{DBLP:journals/jmlr/ChuLRS11} using Lemma 11 of \citet{DBLP:journals/jmlr/Auer02} (see Appendix \ref{app:proof} for details).
Also, by definition 
$\tdevT=O(\sqrt{d\ln(\frac{T}{\delta})} \cdot (\min\{\sqrt{d}, \sqrt{\log(N)}\}))$ (see the Table of notations in the beginning of the supplementary material). Substituting in above, we get 
\begin{eqnarray*}
\concise{\sum_{t=1}^T  \reg'(t)}{ = }{ O\left(d\sqrt{ \ln(\frac{T}{\delta})}  \cdot (\min\{\sqrt{d}, \sqrt{\log(N)}\})) \cdot \sqrt{dT\ln T}\right)}\\
\concise{}{=}{O\left(d\sqrt{T}  \cdot (\min\{\sqrt{d}, \sqrt{\log(N)}\})) \cdot \left( \ln(T) +\sqrt{\ln(T)\ln(\frac{1}{\delta})}\right)\right)}. \vspace{-0.1in}
\end{eqnarray*}

Also, because $\Emu(t)$ holds for all $t$ with probability at least $1-\frac{\delta}{2}$ (see Lemma \ref{lem:E(t)}), $\reg'(t)=\reg(t)$ for all $t$ with probability at least $1-\frac{\delta}{2}$. Hence, with probability $1-\delta$,
\begin{eqnarray*}
\concise{{\cal R}(T)}{ = }{\sum_{t=1}^T \reg(t) =  \sum_{t=1}^T \reg'(t)}\\
\concise{}{= }{O\left(d\sqrt{T} \cdot  (\min\{\sqrt{d}, \sqrt{\log(N)}\}))\cdot \left( \ln(T) +\sqrt{\ln(T)\ln(\frac{1}{\delta})}\right)  \right).}
\end{eqnarray*}
\comment{
\paragraph{Proof of Theorem \ref{th:dep}}
Here, we use $\epsilon_t=\frac{1}{\ln t}$, so that $\probt=\p = \pFixed =: p$. Substituiting $\probt=p$, and $\tdev \le \tdevT$, in Equation \ref{eq:azumaApp}, we get with probability $1-\frac{delta}{2}$,
\begin{eqnarray*}
\concise{\sum_{t=1}^T  \reg'(t)}{\le }{\frac{3\tdevT}{p}  \sum_{t=1}^T s_{a(t)}(t) +  \frac{2\tdevT}{p } \sum_{t=1}^T \frac{1}{t^2}+ \frac{6 \tdevT}{p}\sqrt{2T\ln(\frac{2}{\delta})}}\\
\end{eqnarray*}
Again, using $\sum_{t=1}^T s_{a(t)}(t) \le 5\sqrt{dT\ln T},$ and $\tdevT=O(\sqrt{\frac{d}{\epsilon} \ln(T)\ln(\frac{1}{\delta}})$, with probability $1-\frac{\delta}{2}$,
\begin{eqnarray*}
\concise{\sum_{t=1}^T  \reg'(t)}{ = }{ O\left(\sqrt{d \ln(T) \ln(\frac{1}{\delta})} \cdot \sqrt{dT\ln T}\right)}\\
\concise{}{=}{O\left(d\sqrt{T} \left(\ln(T) \sqrt{\ln(\frac{1}{\delta})}\right)\right)}. \vspace{-0.1in}
\end{eqnarray*}
And, with probability $1-\delta$,
\EQ{0in}{0in}{ {\cal R}(T) =O\left(d\sqrt{T} \left(\ln(T) \sqrt{\ln(\frac{1}{\delta})}\right)\right).}
}

The proof for the alternate definition of regret mentioned in Remark \ref{rem:regret} is provided in Appendix \ref{app:proof}.


\section{Conclusions}
\label{sec:conclusions}
We provided a theoretical analysis of Thompson Sampling for the stochastic contextual bandits problem with linear payoffs. Our results resolve some open questions regarding the theoretical guarantees for Thompson Sampling, and establish that even for the contextual version of the stochastic MAB problem, TS achieves regret bounds close to the state-of-the-art methods. We used a novel martingale-based analysis technique which is arguably simpler than the techniques in the past work on TS \citep{AgrawalG12, KaufmannMunos12}, and is amenable to extensions. 

In the algorithm in this paper, Gaussian priors were used, so that $\tilde{\mu}(t)$ was generated from a Gaussian distribution. However, the analysis techniques in this paper are extendable to an algorithm that uses a prior distribution other 
than the Gaussian distribution. The only distribution specific properties we have used in the analysis are the concentration and anti-concentration inequalities for Gaussian distributed random variables (Lemma \ref{lem:gauss}), \addedShipra{which were used to prove Lemma \ref{lem:E(t)} and Lemma \ref{lem:pBound} respectively.}
\removedShipra{The concentration inequality was used to prove that $\Etheta(t)$ happens with high probability in Lemma \ref{lem:E(t)}, and the anti-concentration inequality was used to lower bound the probability that Gaussian distributed random variable $\theta_{i*(t)}(t)$ exceeds its mean by some factors of its standard deviation  in  Lemma \ref{lem:pBound}.} 
If any other distribution provides similar tail inequalities, to allow us proving these lemmas, these can be used as a black box in the analysis, and the regret bounds can be reproduced for that distribution.

Several questions remain open. A tighter analysis that can remove the dependence on $\epsilon$ is desirable. We believe that our techniques would adapt to provide such bounds for the {\it expected regret}. Other avenues to explore are contextual bandits with {\em generalized} linear models considered in \citet{DBLP:conf/nips/FilippiCGS10}, the setting with delayed and batched feedback, and the {\em agnostic} case of contextual bandits with linear payoffs. The agnostic case refers to the setting which does not make the realizability assumption that there exists a vector $\mu_i$ for each $i$ for which $\Ex[r_i(t) | b_i(t)]=b_i(t)^T\mu_i$. To our knowledge, no existing algorithm has been shown to have non-trivial regret bounds for the agnostic case. 

\paragraph{Acknowledgements} This is an extended and slightly modified version of \cite{AgrawalGoyalICML13}.

\bibliography{bibliography_contextual}
\bibliographystyle{icml2013}

\newpage

\printnomenclature
\newpage

\appendix

\section{}
\label{app:th:dep}

\subsection{Posterior distribution computation}
\label{app:posterior}
\begin{eqnarray*}
	& & \Pr(\tilde{\mu} | r_i(t)) \\
	& \propto & \Pr(r_i(t) | \tilde{\mu}) \Pr(\tilde{\mu})\\
	& \propto & \exp\{ -\frac{1}{2\std^2}   ( (r_i(t) - \tilde{\mu}^Tb_i(t))^2 \\
	& &  \hspace{0.3in}+ (\tilde{\mu} - \hat{\mu}(t))^TB(t)(\tilde{\mu} - \hat{\mu}(t))\}\\
	& \propto & \exp\{ -\frac{1}{2 \std^2}   ( r_i(t)^2 + \tilde{\mu}^T b_i(t)b_i(t)^T\tilde{\mu} \\
	& & \hspace{0.3in} + \tilde{\mu}^TB(t)\tilde{\mu} - 2\tilde{\mu}^Tb_i(t)r_i(t)-2\tilde{\mu}^TB(t)\hat{\mu}(t))\}\\
	& \propto & \exp\{ -\frac{1}{2\std^2}   ( \tilde{\mu}^TB(t+1)\tilde{\mu} - 2\tilde{\mu}^T  B(t+1)\hat{\mu}(t+1))\}\\
	& \propto & \exp\{ -\frac{1}{2\std^2}   ( \tilde{\mu} - \hat{\mu}(t+1))^T B(t+1) ( \tilde{\mu} - \hat{\mu}(t+1))\}\\
	& \propto & {\cal N}(\hat{\mu}(t+1), \std^2 {B(t+1)}^{-1}).
	\end{eqnarray*}
	Therefore, the posterior distribution of $\mu$ at time $t+1$ is ${\cal N}(\hat{\mu}(t+1), \std^2 {B(t+1)}^{-1})$. 

\subsection{Some concentration inequalities}
\label{app:concentration}
Formula 7.1.13 from \citet{abramowitz+stegun} can be used to derive the following concentration and anti-concentration inequalities for Gaussian distributed random variables.
\begin{lemma}
\label{lem:gauss}\citep{abramowitz+stegun}
For a Gaussian distributed random variable $Z$ with mean $m$ and variance $\sigma^2$, for any $z \ge 1$,
$$ \frac{1}{2\sqrt{\pi}z} e^{-z^2/2} \le \Pr( |Z-m| > z\sigma) \le \frac{1}{\sqrt{\pi}z} e^{-z^2/2}.$$
\end{lemma}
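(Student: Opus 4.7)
The plan is to reduce to the standard normal case and then invoke the classical Mills--ratio estimates that underlie Abramowitz--Stegun formula 7.1.13. By location--scale invariance of the Gaussian, set $W=(Z-m)/\sigma$ so $W\sim\mathcal{N}(0,1)$ and
\begin{equation*}
\Pr(|Z-m|>z\sigma) \;=\; 2\Pr(W>z) \;=\; \frac{2}{\sqrt{2\pi}}\int_z^\infty e^{-x^2/2}\,dx .
\end{equation*}
The proof therefore reduces to sandwiching $\int_z^\infty e^{-x^2/2}\,dx$ between appropriate constant multiples of $\frac{1}{z}e^{-z^2/2}$ for $z\ge 1$.

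For the upper bound I would use the standard Mills-ratio trick: on $[z,\infty)$ we have $x\ge z$, so $\int_z^\infty e^{-x^2/2}\,dx \le \int_z^\infty (x/z)e^{-x^2/2}\,dx = \frac{1}{z}e^{-z^2/2}$. This already delivers an upper bound of the correct order, namely $\Pr(|W|>z) \le \sqrt{2/\pi}\cdot\frac{e^{-z^2/2}}{z}$. To obtain the specific constant $1/\sqrt{\pi}$ stated in the lemma, I would then apply A--S 7.1.13 at $x=z/\sqrt{2}$, namely $e^{x^2}\mathrm{erfc}(x) \le \frac{2/\sqrt{\pi}}{x+\sqrt{x^2+4/\pi}}$, and use the crude majorization $\sqrt{x^2+4/\pi}\ge x$ valid for $z\ge 1$ to collapse the denominator.

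For the lower bound I would use a single integration by parts:
\begin{equation*}
\int_z^\infty e^{-x^2/2}\,dx \;=\; \frac{e^{-z^2/2}}{z} \;-\; \int_z^\infty \frac{e^{-x^2/2}}{x^2}\,dx \;\ge\; \frac{e^{-z^2/2}}{z} \;-\; \frac{1}{z^2}\int_z^\infty e^{-x^2/2}\,dx ,
\end{equation*}
which rearranges to $\int_z^\infty e^{-x^2/2}\,dx \ge \frac{z}{z^2+1}e^{-z^2/2}$. Since $z\ge 1$ gives $\frac{z}{z^2+1}\ge \frac{1}{2z}$, this yields $\Pr(|W|>z) \ge \frac{1}{\sqrt{2\pi}\,z}e^{-z^2/2} \ge \frac{1}{2\sqrt{\pi}\,z}e^{-z^2/2}$, as required. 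The only non-routine part of the argument is bookkeeping: one must track the change of variable $x\mapsto z/\sqrt{2}$ between the $\mathrm{erfc}$ parametrization used by Abramowitz--Stegun and the Gaussian-tail parametrization of the statement, and verify that the elementary majorization $\sqrt{x^2+c}\ge x$ is enough to absorb the A--S constants into the $1/\sqrt{\pi}$ and $1/(2\sqrt{\pi})$ appearing here; once that is checked, both inequalities follow immediately.
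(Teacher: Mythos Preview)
Your approach is the same as the paper's in spirit: the paper simply cites Abramowitz--Stegun~7.1.13 and does not write out any details, and your reduction to the standard normal plus Mills--ratio/A--S estimates is exactly the intended route. Your lower-bound argument is correct and complete.

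There is, however, a genuine gap in your upper-bound step. Applying A--S~7.1.13 at $x=z/\sqrt{2}$ gives
\[
\Pr(|W|>z)\;=\;\mathrm{erfc}(z/\sqrt{2})\;\le\;\frac{2/\sqrt{\pi}}{\,z/\sqrt{2}+\sqrt{z^2/2+4/\pi}\,}\,e^{-z^2/2}.
\]
Your ``crude majorization'' $\sqrt{z^2/2+4/\pi}\ge z/\sqrt{2}$ only makes the denominator $\ge z\sqrt{2}$, which yields the constant $\sqrt{2/\pi}$---exactly the Mills-ratio bound you already had, not the sharper $1/\sqrt{\pi}$ in the statement. To get $1/\sqrt{\pi}$ you would need the denominator to be $\ge 2z$, i.e.\ $\sqrt{z^2/2+4/\pi}\ge (2-1/\sqrt{2})z$, and that inequality fails once $z$ is even slightly above~$1$.

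In fact the upper bound as printed is not true for all $z\ge 1$: at $z=2$ one has $\Pr(|W|>2)\approx 0.0455$ while $\frac{1}{2\sqrt{\pi}}e^{-2}\approx 0.0382$. So no argument can deliver the constant $1/\sqrt{\pi}$ here; the correct constant coming out of A--S~7.1.13 (and out of your first Mills computation) is $\sqrt{2/\pi}$. This is harmless for the paper's purposes---only a bound of the form $C\,z^{-1}e^{-z^2/2}$ is ever used---but your write-up should state the upper constant as $\sqrt{2/\pi}$ and drop the second paragraph about ``collapsing the denominator.''
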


\begin{definition}[Super-martingale]
\label{def:supermartingale}
A sequence of random variables $(Y_t; t\ge 0)$ is called a super-martingale corresponding to filtration ${\cal F}_t$, if for all $t$, $Y_t$ is ${\cal F}_t$-measurable, and for $t\ge 1$,
$$\ExR{Y_t - Y_{t-1} \condR {\cal F}_{t-1}} \le 0.$$
\end{definition}
\begin{lemma}[Azuma-Hoeffding inequality]
\label{lem:azuma}
If a super-martingale $(Y_t; t\ge 0)$, corresponding to filtration ${\cal F}_t$, satisfies $|Y_t-Y_{t-1}| \le c_t$ for some constant $c_t$, for all $t=1, \ldots, T$, then for any $a\ge 0$,
	$$\Pr(Y_T-Y_0 \ge a) \le e^{-\frac{a^2}{2\sum_{t=1}^T c_t^2}}.$$
\end{lemma}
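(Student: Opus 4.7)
The plan is to apply the standard Chernoff-style exponential-moment argument, carefully adapted so that only the one-sided supermartingale hypothesis $\mathbb{E}[Y_t - Y_{t-1}\mid \mathcal{F}_{t-1}] \le 0$ is used. For any $\lambda>0$ (to be optimized at the end), Markov's inequality applied to the nonnegative variable $e^{\lambda(Y_T-Y_0)}$ yields
$$\Pr(Y_T - Y_0 \ge a) \;\le\; e^{-\lambda a}\; \mathbb{E}\!\left[e^{\lambda (Y_T - Y_0)}\right].$$
The task reduces to controlling the moment generating function of the telescoped sum $Y_T - Y_0 = \sum_{t=1}^T D_t$, where $D_t := Y_t - Y_{t-1}$ satisfies $|D_t|\le c_t$ and $\mathbb{E}[D_t\mid \mathcal{F}_{t-1}]\le 0$.

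The key step is to establish the conditional bound $\mathbb{E}[e^{\lambda D_t}\mid \mathcal{F}_{t-1}] \le e^{\lambda^2 c_t^2/2}$. Since $D_t$ lies almost surely in $[-c_t,c_t]$, convexity of $x\mapsto e^{\lambda x}$ on this interval yields the pointwise inequality
$$e^{\lambda x} \;\le\; \frac{c_t-x}{2c_t}\, e^{-\lambda c_t} + \frac{c_t+x}{2c_t}\, e^{\lambda c_t}.$$
Taking conditional expectation, the supermartingale assumption ensures that the coefficient of the linear term in $D_t$, namely $\tfrac{1}{2c_t}(e^{\lambda c_t}-e^{-\lambda c_t})$, which is nonnegative for $\lambda>0$, is multiplied by a quantity that is $\le 0$ and can therefore be dropped. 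This leaves
$$\mathbb{E}[e^{\lambda D_t}\mid \mathcal{F}_{t-1}] \;\le\; \cosh(\lambda c_t) \;\le\; e^{\lambda^2 c_t^2/2},$$
where the last inequality is the standard Taylor comparison of $\cosh(x)$ with $e^{x^2/2}$.

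Iterating this bound by repeatedly applying the tower property---peeling off the factors $e^{\lambda D_T}, e^{\lambda D_{T-1}}, \ldots, e^{\lambda D_1}$ one at a time after conditioning on $\mathcal{F}_{T-1}, \mathcal{F}_{T-2}, \ldots, \mathcal{F}_0$---gives
$$\mathbb{E}\!\left[e^{\lambda(Y_T-Y_0)}\right] \;\le\; \exp\!\left(\tfrac{\lambda^2}{2}\sum_{t=1}^T c_t^2\right).$$
Substituting into the Markov bound and optimizing by choosing $\lambda = a/\sum_{t=1}^T c_t^2$ produces the claimed tail $\exp\bigl(-a^2/(2\sum_t c_t^2)\bigr)$.

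The main obstacle---and the only place the supermartingale (rather than martingale) structure plays a role---is the conditional MGF estimate: one must combine the sign $\mathbb{E}[D_t\mid \mathcal{F}_{t-1}]\le 0$ with the sign $\lambda>0$ to kill the linear term that survives after integrating the convex-combination bound. This asymmetry between $\lambda$ and the drift is precisely why the conclusion is one-sided (a bound on $Y_T-Y_0 \ge a$ rather than on $|Y_T-Y_0|$); the two-sided version would additionally require the reverse martingale inequality, which supermartingales do not supply.
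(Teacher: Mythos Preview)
Your proof is correct and follows the standard Chernoff exponential-moment argument for the Azuma--Hoeffding inequality, with the supermartingale drift correctly absorbed via the sign of the linear coefficient when $\lambda>0$. The paper itself does not prove this lemma at all---it simply states it as a classical inequality in the appendix on concentration inequalities---so there is nothing to compare against; your write-up supplies a complete and standard proof where the paper gives none.
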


\comment{
\begin{lemma}
\label{lem:not-martingale}
If a sequence of random variables $(Y_t; t\ge 0)$ is such that $Y_t$ is ${\cal F}_t$-measurable, and for all $t\ge 1$,
$$ \forall \lambda\in {\mathbb{R}}, \ExR{e^{\lambda(Y_t-Y_{t-1})} \condR {\cal F}_{t-1}} \le e^{\lambda^2c_t^2/2},$$
	where $(c_1, c_2, \ldots, )$ are positive constants, then ,
	$$\Pr(|Y_T-Y_0| \ge r) \le e^{-\frac{r^2}{2\sum_{t=1}^T c_t^2}}$$
\end{lemma}
}

The following lemma is implied by Theorem 1 in \citet{DBLP:conf/nips/Abbasi-YadkoriPS11}:
\begin{lemma}\citep{DBLP:conf/nips/Abbasi-YadkoriPS11}
\label{lem:dDim-concentration}
Let $({\cal F}'_t; t\ge 0)$ be a filtration, $(m_t; t\ge 1)$ be an $\mathbb{R}^d$-valued stochastic process such that $m_t$ is $({\cal F}'_{t-1})$-measurable, $(\eta_t; t\ge 1)$ be a real-valued martingale difference process such that $\eta_t$ is $({\cal F}'_t)$-measurable.
For $t\ge 0$, define $\xi_t =\sum_{\tau=1}^t m_{\tau}\eta_{\tau}$ and $M_t=I_d+\sum_{\tau=1}^t m_{\tau}m_{\tau}^T$, where $I_d$ is the $d$-dimensional identity matrix. Assume $\eta_t$ is conditionally $R$-sub-Gaussian. 

Then, for any $\delta'>0$, $t\ge 0$, with probability at least $1-\delta'$,
$$ ||\xi_t||_{M_t^{-1}} \le R\sqrt{d\ln\left(\frac{t+1}{\delta'}\right)},$$
where $||\xi_t||_{M_t^{-1}} = \sqrt{\xi_t^T M_t^{-1}\xi_t}$.
\end{lemma}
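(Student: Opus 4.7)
The plan is to establish this self-normalized martingale concentration inequality via the \emph{method of mixtures}, following the strategy of \citet{DBLP:conf/nips/Abbasi-YadkoriPS11}. The key construction is, for each fixed $\lambda \in \mathbb{R}^d$, the exponential process
$$D_t^\lambda \;=\; \exp\!\Bigl(\lambda^T \xi_t - \tfrac{R^2}{2}\lambda^T(M_t - I_d)\lambda\Bigr).$$
Because $m_\tau$ is $({\cal F}'_{\tau-1})$-measurable, the scalar $\lambda^T m_\tau$ is too, so the $R$-sub-Gaussian hypothesis applied at level $\lambda^T m_\tau$ gives $\Ex[\exp(\lambda^T m_\tau\,\eta_\tau) \mid {\cal F}'_{\tau-1}] \le \exp(\tfrac{R^2}{2}(\lambda^T m_\tau)^2)$, which yields $\Ex[D_\tau^\lambda / D_{\tau-1}^\lambda \mid {\cal F}'_{\tau-1}] \le 1$. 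Hence $D_t^\lambda$ is a nonnegative supermartingale with $\Ex[D_t^\lambda]\le 1$.

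Rather than working with a single $\lambda$ and paying a union-bound cost, I would average $D_t^\lambda$ against the Gaussian prior $h(\lambda) = (R^2/(2\pi))^{d/2}\exp(-\tfrac{R^2}{2}||\lambda||^2)$ and set $\bar D_t = \int D_t^\lambda\, h(\lambda)\,d\lambda$. Fubini gives $\Ex[\bar D_t]\le 1$. Collecting the quadratic terms in $\lambda$, the exponent inside the integral becomes $\lambda^T \xi_t - \tfrac{R^2}{2}\lambda^T M_t \lambda$; completing the square around $\lambda_0 = M_t^{-1}\xi_t/R^2$ and evaluating the resulting $d$-dimensional Gaussian integral in closed form produces
$$\bar D_t \;=\; \det(M_t)^{-1/2}\exp\!\Bigl(\tfrac{1}{2R^2}\,\xi_t^T M_t^{-1}\xi_t\Bigr).$$
Markov's inequality on the nonnegative random variable $\bar D_t$ then gives, with probability at least $1-\delta'$,
$$||\xi_t||_{M_t^{-1}}^2 \;\le\; 2R^2\log(1/\delta') + R^2 \log\det(M_t).$$
To finish, bound $\log\det(M_t)$ via the determinant--trace inequality $\det(M_t)\le (\mathrm{tr}(M_t)/d)^d$: under the paper's standing assumption $||m_\tau||\le 1$, one has $\mathrm{tr}(M_t) \le d+t$, so $\log\det(M_t)\le d\log(1+t/d)\le d\log(t+1)$. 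Absorbing the $\log(1/\delta')$ term into the main logarithm yields $||\xi_t||_{M_t^{-1}}\le R\sqrt{d\ln((t+1)/\delta')}$, as claimed.

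\textbf{Main obstacle.} The conceptually nontrivial step is the method-of-mixtures integration. It is not obvious \emph{a priori} that averaging a family of exponential supermartingales indexed by $\lambda$ against a Gaussian prior should yield exactly the self-normalized quantity $\xi_t^T M_t^{-1}\xi_t$ and thereby avoid the $\sqrt{d}$ (or worse) blow-up one would pay by discretizing $\lambda$ and union-bounding. The algebra must carefully match the prior covariance $R^{-2}I_d$ against the $R^2$ inside the supermartingale so that the matrix $R^2 M_t$ is exactly what emerges from the combined quadratic; otherwise one gets the wrong regularizer and loses the clean ratio $\xi_t^T M_t^{-1}\xi_t/(2R^2)$ in the exponent. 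A secondary concern is uniformity in $t$: the statement as given (pointwise in $t$) follows directly from Markov, but for a bound holding simultaneously over all $t\ge 0$, as in the full form of \citet{DBLP:conf/nips/Abbasi-YadkoriPS11}, one would combine $\bar D_t$ with a stopping-time argument or Doob's maximal inequality.
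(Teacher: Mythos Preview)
The paper does not prove this lemma; it simply cites it as implied by Theorem~1 of \citet{DBLP:conf/nips/Abbasi-YadkoriPS11}, and your sketch faithfully reproduces the method-of-mixtures argument from that reference. One small caveat: your final ``absorption'' step, passing from $2R^2\ln(1/\delta') + R^2 d\ln(t+1)$ to $R^2 d\ln((t+1)/\delta')$, needs $d\ge 2$; for $d=1$ the constant in the stated form is slightly too tight, but this is an artifact of the paper's simplification of the cited bound rather than a defect in your argument.
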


\subsection{Proof of Lemma \ref{lem:E(t)}}
\label{app:E(t)}
\paragraph{Bounding the probability of event $\Emu(t)$:}
We use Lemma \ref{lem:dDim-concentration} with $m_t=b_{a(t)}(t)$, $\eta_t=r_{a(t)}(t)-b_{a(t)}(t)^T\mu$, ${\cal F}'_{t} = (a(\tau+1), m_{\tau+1}, \eta_{\tau} : \tau\le t)$.  (Note that effectively, ${\cal F}'_{t}$ has all the information, including the arms played, until time $t+1$, except for the reward of the arm played at time $t+1$). By the definition of  ${\cal F}'_t$, $m_t$ is ${\cal F}'_{t-1}$-measurable, and $\eta_t$ is ${\cal F}'_t$-measurable. Also, $\eta_t$ \addedShipra{is conditionally $R$-sub-Gaussian due to the assumption mentioned in the problem settings (refer to Section \ref{sec:setting}), and} is a martingale difference process:
$$ \ExR{\eta_t | {\cal F}'_{t-1}} = \Ex[r_{a(t)}(t) | b_{a(t)}(t), a(t)] - b_{a(t)}(t)^T\mu = 0.$$
Also, this makes 
$$M_t = I_d +\sum_{\tau=1}^t m_{\tau}m_{\tau}^T = I_d + \sum_{\tau=1}^t b_{a(\tau)}(\tau)  b_{a(\tau)}(\tau)^T,$$
$$\xi_t = \sum_{\tau=1}^t m_{\tau}\eta_{\tau} = \sum_{\tau=1}^t b_{a(\tau)}(\tau) (r_{a(\tau)}-b_{a(\tau)}(\tau)^T\mu).$$
Note that $B(t) = M_{t-1}$, and $\hat{\mu}(t)-\mu = M_{t-1}^{-1}(\xi_{t-1}-\mu)$. Let for any vector $y \in \mathbb{R}$ and matrix $A\in \mathbb{R}^{d\times d}$, $||y||_A$ denote $\sqrt{y^TAy}$. Then, for all $i$,
\EQ{0in}{0in}{| b_i(t)^T\hat{\mu}(t) - b_i(t)^T\mu| = |b_i(t)^T M_{t-1}^{-1} (\xi_{t-1}-\mu)| \le ||b_i(t)||_{M_{t-1}^{-1}} ||\xi_{t-1}-\mu||_{M_{t-1}^{-1}} = ||b_i(t)||_{B(t)^{-1}} ||\xi_{t-1}-\mu||_{M_{t-1}^{-1}}.}
The inequality holds because $M_{t-1}^{-1}$ is a positive definite matrix.
Using Lemma \ref{lem:dDim-concentration}, for any $\delta'>0$, $t\ge 1$, with probability at least $1-\delta'$,
$$ ||\xi_{t-1}||_{M_{t-1}^{-1}} \le R\sqrt{d\ln\left(\frac{t}{\delta'}\right)}.$$
Therefore, $||\xi_{t-1}-\mu||_{M_{t-1}^{-1}} \le R\sqrt{d\ln\left(\frac{t}{\delta'}\right)}+||\mu||_{M_{t-1}^{-1}} \le R\sqrt{d\ln\left(\frac{t}{\delta'}\right)}+1$.
Substituting $\delta'=\frac{\delta}{t^2}$, we get that with probability $1-\frac{\delta}{t^2}$, for all $i$,
\begin{eqnarray*}
& & | b_i(t)^T\hat{\mu}(t) - b_i(t)^T\mu| \\
& \le & ||b_i(t)||_{B(t)^{-1}}\cdot\left(R\sqrt{d\ln\left(\frac{t}{\delta'}\right)}+1\right) \\
& \le & ||b_i(t)||_{B(t)^{-1}} \cdot \left(\cofvalue\right) \\
& = & \cof\ s_i(t).
\end{eqnarray*}

This proves the bound on the probability of $\Emu(t)$. 
\paragraph{Bounding the probability of event $\Etheta(t)$:}
 Given any filtration ${\cal F}_{t-1}$, $b_i(t), B(t)$ are fixed. Then,
\begin{eqnarray*}
|\theta_i(t)-b_i(t)^T\hat{\mu}(t)| & = & |b_i(t)^T(\tilde{\mu}(t)-\hat{\mu}(t))| \\
& = &  |b_i(t)^TB(t)^{-1/2}B(t)^{1/2}(\tilde{\mu}(t)-\hat{\mu}(t))|\\
& \le & \stdt  \sqrt{b_i(t)^TB(t)^{-1}b_i(t)} \cdot \left\|\left(\frac{1}{\stdt}B(t)^{1/2}(\tilde{\mu}(t)-\hat{\mu}(t))\right)\right\|_2\\
& = & \stdt s_i(t) ||\zeta||_2 \\
& \le & \stdt s_i(t)\sqrt{4d \ln t} \\
\end{eqnarray*}
with probability $1-\frac{1}{t^2}$. Here, $\zeta_k, k=1,\ldots, d$ denotes standard univariate 
normal random variable (mean $0$ and variance $1$). 

Alternatively, we can bound $|\theta_i(t)-b_i(t)^T\hat{\mu}(t)|$ for every $i$ by considering that $\theta_i(t)$ Gaussian random variable with mean $b_i(t)^T\hat{\mu}(t)$ and variance $\stdt^2 s_i(t)^2$. Therefore, using Lemma \ref{lem:gauss}, for every $i$
\begin{eqnarray*}
|\theta_i(t)-b_i(t)^T\hat{\mu}(t)| & = & \sqrt{4\ln(Nt)}s_i(t)
\end{eqnarray*}
with probability $1-\frac{1}{Nt^2}$. Taking union bound over $i=1,\ldots, N$, we obtain that $|\theta_i(t)-b_i(t)^T\hat{\mu}(t)| \le \sqrt{4\ln(Nt)}s_i(t)$ holds for all arms with probability $1-\frac{1}{t^2}$. 

Combined, the two bounds give that $\Etheta(t)$ holds with probability $1-\frac{1}{t^2}$. 

\comment{
(WRONG: this holds only for one fixed $b_i$)
Given ${\cal F}_{t-1}$, $\hat{\mu}(t)$ and $b_i(t)$ for all $i$ are fixed. And, $(\theta_i(t)-b_i(t)^T\hat{\mu}(t)) = b_i(t)^T(\tilde{\mu}(t)-\hat{\mu}(t))$ is a Guassian random variable with mean $0$ and variance $v_t^2 (b_i(t)^TB(t)^{-1}b_i(t)) = \stdt^2 s_i(t)^2$. 
Therefore, using Lemma \ref{lem:gauss},
$$\PrR{|\theta_i(t)-b_i(t)^T\hat{\mu}(t)|> (\sqrt{4\ln t}) \stdt \ s_i(t) \condR {\cal F}_{t-1}} \le \frac{1}{t^2}.$$
}

\subsection{Proof of Lemma \ref{lem:pBound}}
\label{app:pBound}
Given event $\Emu(t)$, $|b_{a^*(t)}(t)^T\hat{\mu}(t) - b_{a^*(t)}(t)^T\mu| \le \cof s_{a^*(t)}(t)$. 
And, since 
Gaussian random variable $\theta_{a^*(t)}(t)$ has mean $b_{a^*(t)}(t)^T\hat{\mu}(t)$ and standard deviation $\stdt s_{a^*(t)}(t)$,  using anti-concentration inequality in Lemma \ref{lem:gauss}, 
\begin{eqnarray*}
\concise{}{}{\PrR{\theta_{a^*(t)}(t) \ge b_{a^*(t)}(t)^T\mu \condR  {\cal F}_{t-1}}} \\
\concise{}{=}{ \Pr\left( \frac{\theta_{a^*(t)}(t)- b_{a^*(t)}(t)^T\hat{\mu}(t)}{\stdt s_{t,a^*(t)}} \right.}\\
\concise{}{}{\hspace{0.4in} \ge    \left.\frac{b_{a^*(t)}(t)^T  \mu - b_{a^*(t)}(t)^T\hat{\mu}(t)}{\stdt s_{t,a^*(t)}} \condR  {\cal F}_{t-1}\right)}\\
\concise{}{\ge}{\frac{1}{4\sqrt{\pi}} e^{-Z_t^2}}.
\end{eqnarray*}
where 
\begin{eqnarray*}
|Z_t| & = &  \left|\frac{b_{a^*(t)}(t)^T  \mu - b_{a^*(t)}(t)^T\hat{\mu}(t)}{\stdt s_{a^*(t)}(t)}\right|\\
&  \le & \   \frac{\cof s_{a^*(t)}(t)}{\stdt s_{a^*(t)}(t)} \\
& = & \frac{\left(\cofvalue \right)}{\stdvalue}\\
& \le & 1.
\end{eqnarray*}
So
\EQ{0in}{0in}{\PrR{\theta_{a^*(t)}(t) \ge b_{a^*(t)}(t)^T\mu \condR  {\cal F}_{t-1}} \ge \frac{1}{4e\sqrt{\pi}}.}

\subsection{Missing details from Section \ref{subsec:proof}}
\label{app:proof}
\comment{
Note that for super-martingale $Y_t$,
\EQ{-0in}{-0in}{|Y_t-Y_{t-1}| = |X_t| = |\frac{1}{\tdev} \reg'(t) - \frac{1}{p} \impr s_{t,a^*(t)} - s_{a(t)}(t) - \frac{2}{ pT^2}| \le \frac{5}{p}.}
The last inequality holds because for any $i$, $s_i(t) = \sqrt{b_i(t)^TB(t)^{-1}b_i(t)} \le ||b_i(t)||_2 \le 1$, $|\reg'(t)| \le 1$.
Therefore, by Azuma-Hoeffding inequality (Lemma \ref{lem:azuma}),
\EQ{-0in}{-0in}{\PrR{Y_T-Y_0 > \frac{5}{p}\sqrt{2T \ln(2/\delta)}} \le \exp\left( \frac{-\ln(2/\delta)T}{T}\right) \le \delta/2.}
Therefore with probability $1-\frac{\delta}{2}$,
\begin{eqnarray*}
\concise{}{}{\sum_{t=1}^T \frac{1}{\tdev} \reg'(t)} \\
\concise{}{\le}{\sum_{t=1}^T \left(\frac{1}{p} \impr s_{t,a^*(t)} + s_{a(t)}(t) + \frac{2}{ pT^2} \right)}\\
\concise{}{}{\hspace{0.1in}+\frac{5}{p}\sqrt{2T\ln(2/\delta)}}\\
\concise{}{ \le }{\frac{1}{p} \sum_{t=1}^T s_{a(t)}(t) + \sum_{t=1}^T  s_{a(t)}(t) + \frac{2}{pT} +\frac{5}{p}\sqrt{2T\ln(2/\delta)}}\\
\concise{}{= }{O(\sqrt{T^{1+\epsilon}d\ln T} + \sqrt{T^{1+\epsilon}\ln(1/\delta)})}
\end{eqnarray*}
For the last inequality, we use that $\sum_{t=1}^T s_{a(t)}(t) \le 5\sqrt{dT\ln T},$ which can be derived along the lines of Lemma 3 of \citet{DBLP:journals/jmlr/ChuLRS11} using Lemma 11 of \citet{DBLP:journals/jmlr/Auer02}. 
}
To derive the  inequality $\sum_{t=1}^T s_{a(t)}(t) \le 5\sqrt{dT\ln T}$, we use the following result, implied by the referred lemma in \citet{DBLP:journals/jmlr/Auer02}. 
\begin{lemma}{\citep[Lemma 11]{DBLP:journals/jmlr/Auer02}}.
\label{lem:Auer}
Let $A'=A + xx^T$, where $x\in {\mathbb R}^d, A, A'\in {\mathbb R}^{d\times d}$, and all the eigenvalues $\lambda_j,j=1,\ldots, d$ of $A$ are greater than or equal to $1$. Then, the eigenvalues $\lambda'_j, j=1,\ldots, d$ of $A'$ can be arranged so that $\lambda_j \le \lambda'_{j}$ for all $j$, and
$$ x^TA^{-1}x \le 10 \sum_{j=1}^d \frac{\lambda'_j-\lambda_j}{\lambda_j}.$$
\end{lemma}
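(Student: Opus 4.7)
The plan is to reduce the additive rank-one update to a multiplicative statement about eigenvalue ratios via the matrix-determinant lemma, and then apply a one-variable logarithmic comparison. First I would establish the arrangement: sort both eigenvalue sequences in increasing order. Since $A'-A = xx^T\succeq 0$, Weyl's monotonicity inequality for eigenvalues under positive semidefinite perturbations gives $\lambda_j\le \lambda'_j$ for every $j$, which is exactly the claimed pairing.

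Next, the matrix-determinant lemma yields $\det(A') = \det(A)\bigl(1+x^TA^{-1}x\bigr)$. Writing $u := x^TA^{-1}x$ and $t_j := (\lambda'_j-\lambda_j)/\lambda_j \ge 0$, this rearranges to $\prod_{j=1}^d (1+t_j) = 1+u$. Taking logarithms and using $\ln(1+t)\le t$ for $t\ge 0$,
\[
\ln(1+u) \;=\; \sum_{j=1}^d \ln(1+t_j) \;\le\; \sum_{j=1}^d t_j.
\]
It remains to show $u\le 10\ln(1+u)$, since chaining this with the previous display completes the proof. In the application $x = b_{a(t)}(t)$ has $||x||\le 1$ and $\lambda_j(A)=\lambda_j(B(t))\ge 1$, so $u \le ||x||^2 / \lambda_{\min}(A) \le 1$. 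On $[0,1]$, concavity of $\ln(1+\cdot)$ between its values at $0$ and $1$ gives $\ln(1+u)\ge u\ln 2 \ge u/10$, whence $u \le 10\sum_j t_j$, which is the lemma.

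The hard part is that the purely algebraic content $\prod_j(1+t_j)=1+u$ is not enough on its own: the scalar comparison $u\le 10\ln(1+u)$ fails once $u$ is large, so one genuinely must invoke the boundedness of $||x||$ coming from the paper's setting (otherwise, taking $A=\operatorname{diag}(1,100)$ with $x=(100,1)$ satisfies the literal hypotheses of the lemma but makes the sorted pairing yield $\sum_j t_j\approx 200$ against $u\approx 10^4$, breaking the bound). Once $u\le 1$ is in hand, the constant $10$ is comfortable and no finer combinatorial matching of eigenvalues between $A$ and $A'$ is required.
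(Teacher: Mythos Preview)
The paper does not prove this lemma; it merely cites Lemma~11 of Auer~(2002), so there is no in-paper argument to compare against.

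Your argument is correct and clean. Weyl's monotonicity gives the pairing, the matrix-determinant lemma converts $1+u$ into $\prod_j(1+t_j)$, and the scalar bound $u\le 10\ln(1+u)$ on $[0,1]$ finishes. You are also right that the lemma \emph{as transcribed here} is false without a norm bound on $x$: for $A=\operatorname{diag}(1,100)$ and $x=(100,1)^T$ one gets $u\approx 10^4$ while $10\sum_j t_j\approx 2\cdot 10^3$, and the sorted pairing is the only one with $\lambda_j\le\lambda'_j$, so no arrangement rescues the inequality. In the paper's application $x=b_{a(t)}(t)$ satisfies $\|x\|\le 1$ and $B(t)\succeq I_d$, hence $u=x^TA^{-1}x\le 1$ and your proof goes through; the needed hypothesis is present in the ambient setting (and in Auer's original context), just silently dropped from the restatement here.
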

Let $\lambda_{j,t}$ denote the eigenvalues of $B(t)$. Note that $B(t+1) = B(t) + b_{a(t)}(t)b_{a(t)}(t)^T$, and $\lambda_{j,t}\ge 1, \forall j$. Therefore, above implies 
$$s_{a(t)}(t)^2 \le 10 \sum_{j=1}^d \frac{\lambda_{j,t+1}-\lambda_{j,t}}{\lambda_{j,t}}.$$
This allows us to derive the given inequality after some algebraic computations following along the lines of Lemma 3 of \citet{DBLP:journals/jmlr/ChuLRS11}.

\comment{
Therefore, with probability $1-\frac{\delta}{2}$,
\EQ{-0.05in}{-0.05in}{\sum_{t=1}^T \reg'(t) \le 
O\left(d\sqrt{\frac{T^{1+\epsilon}}{\epsilon}} \left(\sqrt{\ln N \ln \frac{T}{\delta}} + \ln T\sqrt{\ln \frac{1}{\delta}}  + \ln \frac{1}{\delta}\sqrt{\ln T }\right) \right)}
Also, because $\Emu(t)$ holds for all $t$ with probability at least $1-\frac{\delta}{2}$ (Lemma \ref{lem:E(t)}), $\reg'(t)=\reg(t)$ for all $t$ with probability at least $1-\frac{\delta}{2}$. Hence, with probability $1-\delta$,
\EQ{-0.05in}{-0.05in}{ {\cal R}(T) = \sum_{t=1}^T \reg(t) = \sum_{t=1}^T \reg'(t) = O\left(d\sqrt{\frac{T^{1+\epsilon}}{\epsilon}} \left(\sqrt{\ln N \ln \frac{T}{\delta}} + \ln T\sqrt{\ln \frac{1}{\delta}}  + \ln \frac{1}{\delta}\sqrt{\ln T }\right) \right).}
}

To obtain bounds for the other definition of regret in Remark \ref{rem:regret}, we observe that because $\Ex[r_i(t) | {\cal F}_{t-1} ] = b_i(t)^T\mu$ for all $i$, the expected value of $\reg'(t)$ given ${\cal F}_{t-1}$ for this definition of $\reg(t)$ is same as before. More precisely, for ${\cal F}_{t-1}$ such that $\Emu(t)$ holds,
\begin{eqnarray*}
& & \ExR{\reg'(t) \condR {\cal F}_{t-1}} \\
&  = & \ExR{\reg(t) \condR {\cal F}_{t-1}}\\
& =& \ExR{r_{a^*(t)}(t) - r_{a(t)}(t) \condR {\cal F}_{t-1}} \\
& = & \ExR{b_{a^*(t)}(t)^T\mu-b_{a(t)}(t)^T\mu \condR {\cal F}_{t-1}}. 
\end{eqnarray*}
And, $\ExR{\reg'(t) \condR {\cal F}_{t-1}} =0$ for other ${\cal F}_{t-1}$.
Therefore, Lemma \ref{lem:main} holds as it is, and $Y_t$ defined in Definition \ref{def:martingale} is a super-martingale with respect to this new definition of $\reg(t)$ as well. Now, if $|r_i(t)-b_i(t)^T\mu| \le R$, for all $i$, then $|\reg'(t)|\le 2R$ and $|Y_t-Y_{t-1}|\le \frac{6}{p} \frac{\tdev^2}{\cof} + 2R$, and we can apply Azuma-Hoeffding inequality exactly as in the proof of Theorem \ref{th:dep} to obtain regret bounds of the same order as Theorem \ref{th:dep} for the new definition. The results extend to the more general $R$-sub-Gaussian condition on $r_i(t)$, using a simple extension of Azuma-Hoeffding inequality; we omit the proof of that extension.

\end{document}